\setlist{itemsep=1pt, topsep=1pt} 
\theoremstyle{plain}
\newtheorem{theorem}{Theorem}[section]
\newtheorem{proposition}[theorem]{Proposition}
\newtheorem{lemma}[theorem]{Lemma}
\theoremstyle{definition}
\newtheorem{definition}[theorem]{Definition}
\theoremstyle{remark}
\newtheoremstyle{problemstyle} 
  {3pt} 
  {3pt} 
  {\normalfont} 
  {} 
  {\bfseries} 
  {.} 
  {0.5em} 
  {} 
\theoremstyle{problemstyle}
\newtheorem{problem}{Problem} 
\newtheorem*{proofsketch}{Proof sketch}
\newcommand\eqdistrib{\mathrel{\overset{\makebox[0pt]{\mbox{\normalfont\tiny\sffamily d}}}{=}}}
\icmltitlerunning{Identifying Metric Structures of Deep Latent Variable Models}
\begin{document}

\twocolumn[
\icmltitle{Identifying Metric Structures of Deep Latent Variable Models}



\icmlsetsymbol{equal}{*}

\begin{icmlauthorlist}
\icmlauthor{Stas Syrota}{dtu}
\icmlauthor{Yevgen Zainchkovskyy}{dtu}
\icmlauthor{Johnny Xi}{ubc}
\icmlauthor{Benjamin Bloem-Reddy}{ubc}
\icmlauthor{Søren Hauberg}{dtu}
\end{icmlauthorlist}

\icmlaffiliation{dtu}{Department of Applied Mathematics and Computer Science,  Technical University of Denmark, Lyngby, Denmark}
\icmlaffiliation{ubc}{Department of Statistics, University of British Columbia}

\icmlcorrespondingauthor{Stas Syrota}{stasy@dtu.dk}
\icmlkeywords{Machine Learning, ICML}

\vskip 0.3in
]



\printAffiliationsAndNotice{}  

\begin{abstract}
Deep latent variable models learn condensed representations of data that, hopefully, reflect the inner workings of the studied phenomena. Unfortunately, these latent representations are not statistically identifiable, meaning they cannot be uniquely determined. Domain experts, therefore, need to tread carefully when interpreting these. Current solutions limit the lack of identifiability through additional constraints on the latent variable model, e.g.\@ by requiring labeled training data, or by restricting the expressivity of the model. We change the goal: instead of identifying the latent variables, we identify \emph{relationships} between them such as meaningful distances, angles, and volumes. We prove this is feasible under very mild model conditions and without additional labeled data. We empirically demonstrate that our theory results in more reliable latent distances, offering a principled path forward in extracting trustworthy conclusions from deep latent variable models.

\end{abstract}

\section{Introduction}
Latent variable models express the density of observational data through a set of latent, i.e.\@ unobserved, variables that ideally capture the driving mechanisms of the data-generating phenomena. For example, the latent variables of a variational autoencoder \citep{VAE, rezende2014stochastic} trained on protein data, can reveal the underlying protein evolution which can help domain experts understand a problem of study \citep{riesselman2018deep, ding2019deciphering, Detlefsen_2022}. 

Unfortunately, latent variables are rarely identifiable, i.e.\@ they cannot be uniquely estimated from data. This lack of uniqueness prevents reliable analysis of the learned latent variables as the analysis becomes subject to the arbitrariness of model training. Fig.~\ref{fig:transcriptomic} exemplifies the issue, where two independently trained latent representations are estimated from transcriptomic data \citep{tasic2018shared}. While clusters are similarly estimated across models, the \emph{relationships} between clusters vary significantly, preventing us from extracting intra-cluster information from the plots. This fundamental issue has sparked the development of training heuristics to limit the issue \citep{thearttsne}, and the practice of analyzing latent variables has been both disputed \citep{chari2023specious} and defended \citep{lause2024art}. This entire discussion could have been avoided if only distances between latent variables had been identifiable.\looseness=-1

\begin{figure}[t]
    \centering
    \rotatebox{90}{\footnotesize{~~~~~Latent representation}}
    \includegraphics[width=0.4\linewidth]{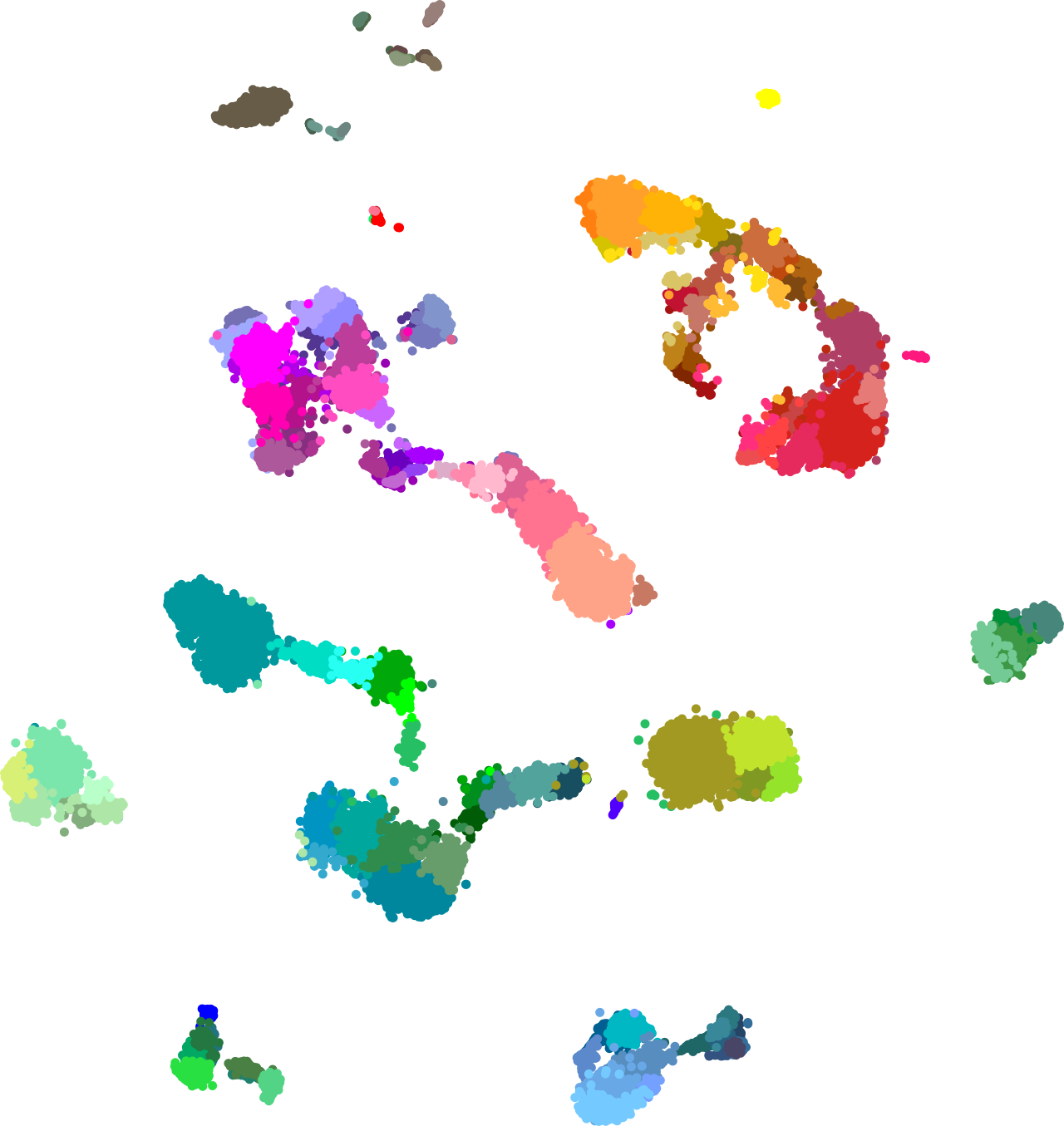}
    \includegraphics[width=0.4\linewidth]{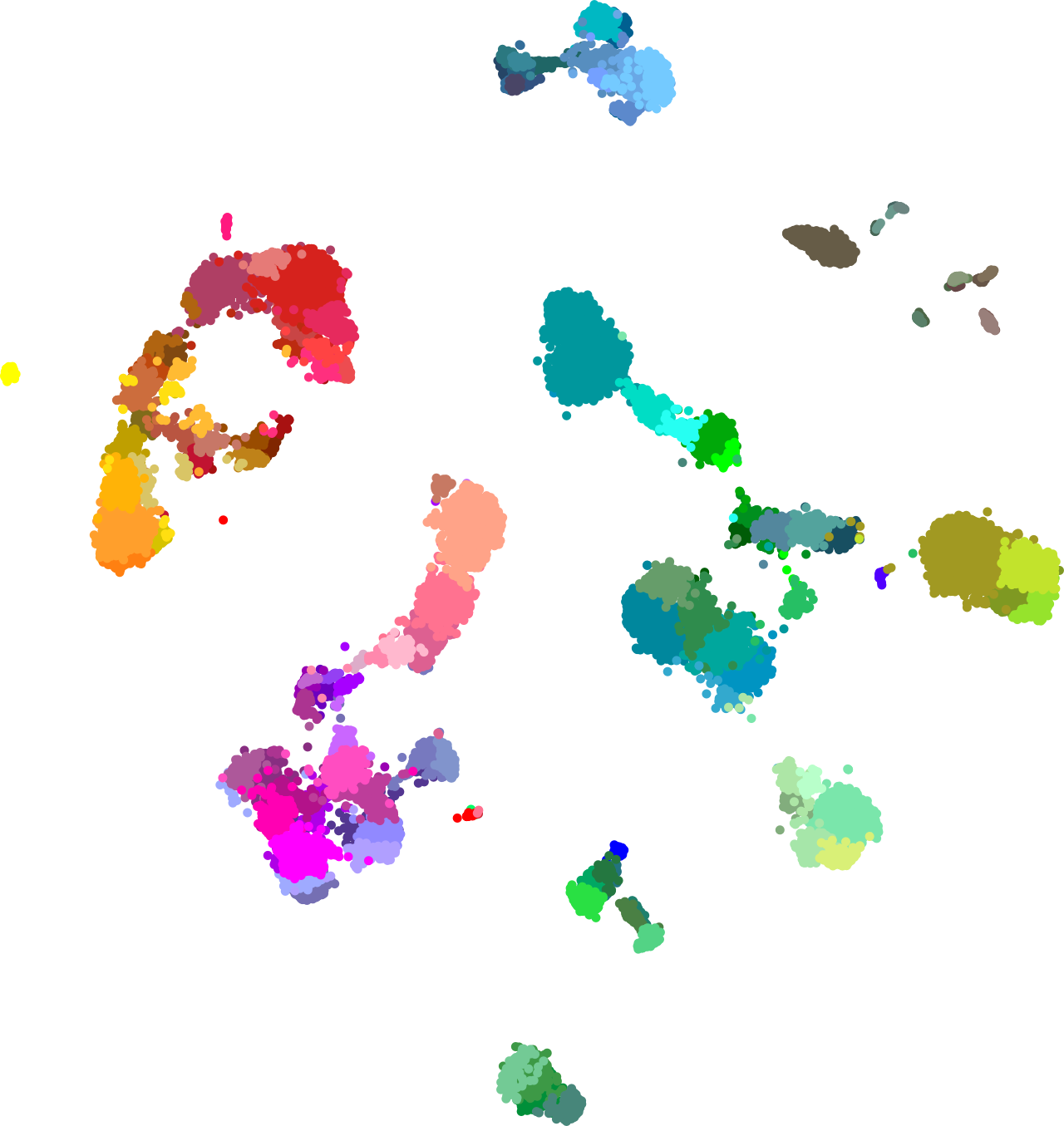}\\
    \rotatebox{90}{\footnotesize{Latent Euclidean distances}}
    \includegraphics[width=0.4\linewidth]{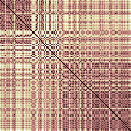}
    \includegraphics[width=0.4\linewidth]{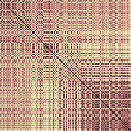}
    \vspace{-2mm}
    \caption{Latent representations of transcriptomic data (top row) changes with model retraining. Each column corresponds to a model trained from scratch. The latent variables are not identifiable and change between training runs. Pairwise Euclidean distances (bottom row, averaged across cell types) also change significantly between runs. This lack of identifiability prevents us from reliably using the latent representations to understand the underlying biology.}
    \label{fig:transcriptomic}
\end{figure}

Providing identifiability guarantees has been heavily investigated. We survey key results in Secs.~\ref{sec:background} and \ref{sec:related_work}, but the executive summary is that current approaches require significant model restrictions (e.g.\@ linearity assumptions), labeled data, or a combination of both. These solutions are underwhelming to practitioners who often lack a labeling mechanism and are keen to leverage contemporary generative models.

\textbf{In this paper}, we change the identifiability question to arrive at working tools that do not require data labels and only impose minimal restrictions on the used models. Instead of identifying the latent variables, we identify the \emph{relationship between latent variables}. For example, instead of identifying the coordinates of the latent variables we identify pairwise distances. In our experience, this matches the needs of domain experts who rarely assign meaning to the coordinates of latent variables. Using differential geometry, we prove strong identifiability guarantees on pairwise distances, angles, and more. We empirically validate our theory on four different generative models. 

\section{Background and notation}
\label{sec:background}
Before stating our main questions and results, we recap the prerequisite background information. We position our work relative to the existing literature in Sec.~\ref{sec:related_work}.

\textbf{Deep latent variable models} learn densities of data $X \in \mathcal{D}$ parametrized by latent variables $Z \in \mathcal{Z}$, such that $p(X) = \int p(X|Z) p(Z) \mathrm{d}Z$ \citep{tomczak2024deep}. We consider models with \emph{continuous} latent variables, i.e., $\mathcal{Z} \subseteq \mathbb{R}^n$. Examples of this model class include \emph{probabilistic PCA} \citep{tipping1999probabilistic}, \emph{variational autoencoders} \citep{VAE, rezende2014stochastic}, \emph{normalizing flows} \citep{tabak2010density, lipman2022flow}, \emph{diffusion models} \citep{ho2020denoising} and more.

Formally, we define a model as a tuple of random variables $(Z, X)$ where the latent $Z$ drives the observations $X$ through a measurable \emph{generator function} $f:\mathcal{Z}\to \mathcal{D}$, often called \emph{the decoder}, and a \emph{noise mechanism} $h:\mathcal{D}\times \mathcal{D}\to \mathcal{D}$ that makes the relationship stochastic through a noise term $\epsilon$,
\begin{equation}
   Z^i \sim P_Z, \quad \epsilon^i \sim P_{\epsilon}, \quad X^i = h (f(Z^i), \epsilon^i) 
\end{equation}
where $Z^i$ and $\epsilon^i$ are assumed independent. We further adopt a standard regularity assumption that $h$ and $P_{\epsilon}$ are such that $\epsilon^a \eqdistrib \epsilon^b$, $h(f(Z^a), \epsilon^a) \eqdistrib h(f(Z^b), \epsilon^b)$ if and only if $f(Z^a) \eqdistrib f(Z^b)$. Here $\eqdistrib$ denotes equality in distribution and the assumption ensures that the noise $\epsilon$ does not interfere with the causal relationship between $X$ and $Z$. 

\textbf{Statistical model} arises when we learn the parameters of the generative model given realizations $\mathbf{x}$ of $X$. Learning the generative model means estimating its parameters $\theta = (f, P_Z)$, which represent the decoder and the latent distribution, respectively. These give rise to the marginal distribution of the data $P_{\theta}$ that quantifies model fit. Formally, we define a model $M$ as
\begin{equation} \label{eq:StatModLvm}
    M\left(\mathcal{F}, \mathcal{P}_Z\right)=\left\{P_\theta \text { on } \mathcal{D} \mid \theta=\left(f, P_Z\right) \in \mathcal{F} \times \mathcal{P}_Z\right\},
\end{equation}
where $\mathcal{F}$ and $\mathcal{P}_Z$ are the sets of possible generator functions and distributions on the latent space, respectively. Designing a deep latent variable model means specifying $\mathcal{F}$ and $\mathcal{P}_Z$.

\textbf{Identifiability} concerns the uniqueness of parametrizations. We say that two parameters $\theta$ and $\theta'$ are equivalent, $\theta \sim \theta'$, if the resulting distributions $P_{\theta}$ and $P_{\theta'}$ are the same. The induced \emph{equivalence class} is denoted $[\theta]=\left\{ \theta': P_{\theta} = P_{\theta'} \right\}$. Informally, this class captures the different ways in which a specific density can be parameterized. Following \citet{xi2023indeterminacy}, we say that a model is \emph{strongly identifiable} if $[\theta]$ is a singleton, i.e.\@ the model parametrization is unique, while a model is \emph{weakly identifiable} if it can be identified up to the equivalence class $[\theta]$. 

As an example, in probabilistic PCA \citep{tipping1999probabilistic}, the latent variables can only be identified up to an unknown rotation due to the rotational symmetry of the Gaussian distribution. We then write the equivalence class as $[\theta] = \{R\theta\}$, where $R$ is any rotation matrix (Fig.~\ref{fig:automorph}, top).

\begin{figure}
    \centering
    \footnotesize{Latent space~~~~~~~~~~~~~~~~~~~~~~~~~~Observation space}\\
    \rotatebox{90}{\footnotesize{~~~Probabilistic PCA}}
    \includegraphics[width=0.9\linewidth]{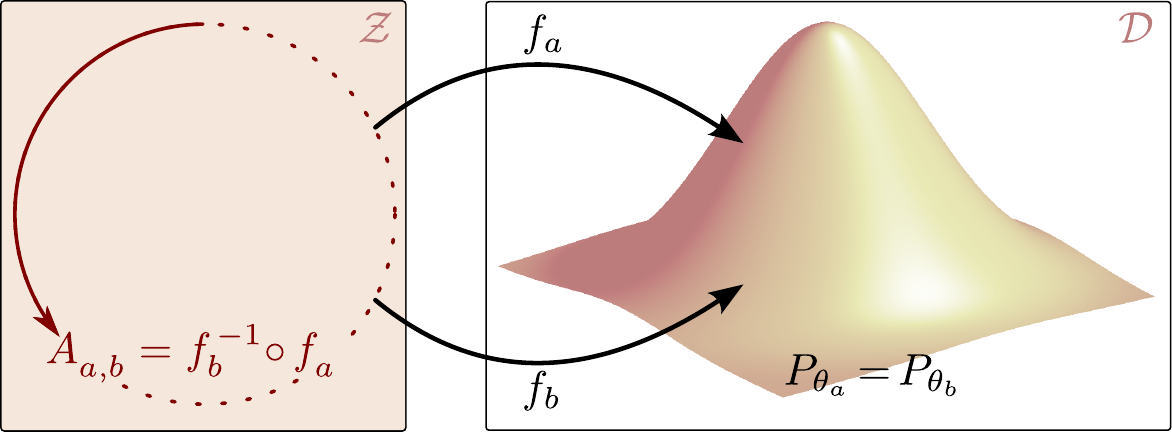} \\ \vspace{1mm}
    \rotatebox{90}{\footnotesize{~~~~~~~General case}}
    \includegraphics[width=0.9\linewidth]{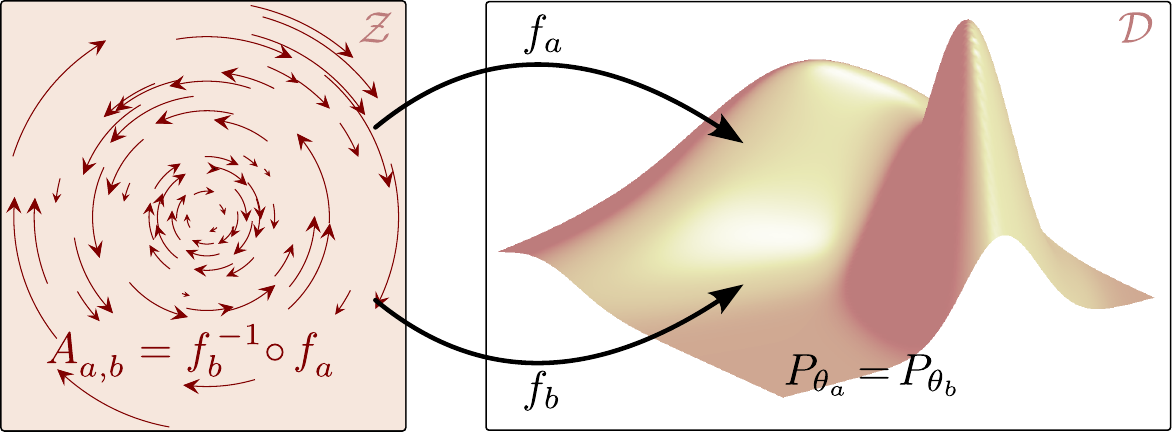}
    \vspace{-2mm}
    \caption{Indeterminacy transformations characterize the identifiability equivalence class.
      \emph{Top row:} Probabilistic PCA has linear decoders, such that the indeterminacy transformations are rotations.
      \emph{Bottom row:} In general deep latent variable models the indeterminacy transformations are the general class of diffeomorphisms acting on the latent space.}
    \label{fig:automorph}
\end{figure}

\textbf{Indeterminacy transformations} provide means to characterizing the equivalence class of a latent variable model $M$ \citep{xi2023indeterminacy}. Given two parametrizations of a model $\theta_a=\left(f_a, P_{Z_a}\right)$ and $\theta_b=\left(f_b, P_{Z_b}\right)$ with resulting marginal distributions $P_{\theta_a}=P_{\theta_b}$, an \emph{indeterminacy transformation} at $(\theta_a, \theta_b)$ is a measurable function $A_{a,b}: \mathcal{Z} \rightarrow \mathcal{Z}$ such that $P_{\theta_a}=P_{\theta_b}$ and $f_a \circ A_{a,b}^{-1}=f_b$; c.f.\@ bottom panel of Fig.~\ref{fig:automorph}. \citet{xi2023indeterminacy} prove that the set of all indeterminacy transformations, denoted $\mathbf{A}(M)$, fully determines the equivalence class $[\theta]$.  \emph{This result establishes the equivalence between parameter identifiability and indeterminacy transformations of the latent space and their associated decoders}.

\textbf{Identifiable task} captures latent computations with identifiable outcomes \citep{xi2023indeterminacy}. Here, a task is defined by first \emph{selecting} latent points $\mathbf{z}_n = s(\theta,\mathbf{x}_m) \in  \mathcal{Z}$, and secondly by evaluating the task $t(\theta,\mathbf{x}_m,\mathbf{z}_n)$. The selection mechanism can e.g.\@ be the inverse decoder, while a task could be independence testing in causal discovery or measuring the distance between latent representations.

Following Proposition~3.1 from \citet{xi2023indeterminacy}, we can state the sufficient condition for the identifiability of a task in terms of indeterminacy transformations.
\begin{definition}
\label{def:task_identifiability}
A task $(s, t)$ is identifiable up to $[\theta]$ if, for each $A\in \mathbf{A}(M)$ and  $\mathbf{x}_m \in \mathcal{D}$ with $\mathbf{z}_n \in \mathcal{Z}$:
\begin{equation}
\begin{aligned}
    t(\theta, \mathbf{x}_m ,  \mathbf{z}_n) = t(A\theta, \mathbf{x}_m , A( \mathbf{z}_n)) \\
    \text{and } s(A\theta, \mathbf{x}_m ) = A(s(\theta, \mathbf{x}_m )),
\end{aligned}
\label{eq:task_identifiability}
\end{equation}    
where with $\theta_a, \theta_b \in [\theta]$, we have $A\theta_a=\theta_b=(f_a \circ A^{-1}, A_{\#}P_{Z_a}) = (f_b, P_{Z_b})$ and $A_{\#}P_{Z_a}$ denotes the pushforward of the probability measure $P_{Z_a}$. 
\end{definition}
\section{Problem statement}
\label{sec:problem_statement}
We address the challenge of making pairwise distances statistically identifiable in modern deep generative models without impractical assumptions. Next, we outline our assumptions and formalize the objective, which we solve in Sec.~\ref{results} and we show that our approach ensures not only identifiable distances, but also a broader set of identifiable metric structures.

\subsection{Assumptions on $\mathcal{F}$ and $\mathcal{Z}$} 

Following typical literature \cite{xi2023indeterminacy,8575533}, we further impose assumptions on the space of our decoder functions $\mathcal{F}$ and the latent space $\mathcal{Z}$. We consider decoders that are smooth functions $f: \mathcal{Z} \rightarrow \mathcal{D}$ such that for each $f \in \mathcal{F}$:
\begin{enumerate}[label=\textbf{A\arabic*},ref=A\arabic*]
    \item $\mathcal{Z}$ is compact \label{ass:1}
    \item $f$ is injective \label{ass:2}
    \item The differential of $f$, $\mathrm{d}f$, has full column rank \label{ass:3}
    \item All $f \in \mathcal{F}$ have the same image. That is, for any $f_a, f_b \in \mathcal{F}$, we have $f_a(\mathcal{Z}) = f_b(\mathcal{Z}) := \mathcal{M} \subseteq \mathcal{D}$ \label{ass:4}
\end{enumerate}

Assumptions~\ref{ass:2}-\ref{ass:4} are repeated from \citeauthor{xi2023indeterminacy}, whereas we add assumption~\ref{ass:1} and require $f$ to be smooth. Together, these allow us to treat the image of the decoder as a smooth manifold. Assumption~\ref{ass:1} is purely technical and can be interpreted as (after model training) we consider a compact subset of the latent space, e.g.\@ the range of floating point numbers.
%
%
Jointly, the assumptions may appear restrictive, but they are satisfied by contemporary models such as $\mathcal{M}$-flows \citep{brehmer2020flows}, normalizing flows, and diffusion models. VAEs need not satisfy \ref{ass:2}. On the other hand, \ref{ass:3} can be empirically validated after model training \citep{8575533}, and experiments (Sec.~\ref{sec:experiments}) show that our methodology is effective in this setting.

\subsection{Identifiability of distances}
In this paper, we shift focus from identifiability of latent representations (or equivalently, model parameters) and instead identify the relations between them. As our main focus, we seek to establish a distance measure that is invariant under the indeterminacy transformations $\mathbf{A}(M)$ of a deep latent variable model and therefore identifiable. 
\begin{problem}
\label{problem}
Consider a deep latent variable model $M(\mathcal{F}, \mathcal{P}_Z)$ and $\mathbf{A}(M)$ its set of indeterminacy transformations. We want to identify latent distances, i.e.\@ find a `meaningful' distance function $d: \mathcal{Z}\times \mathcal{Z} \rightarrow \mathbb{R}_+$, such that given a parametrization $\theta$, for any $\mathbf{z}_1, \mathbf{z}_2 \in \mathcal{Z}$ and $A \in \mathbf{A}(M)$ the following is staisfied:
\begin{equation}
\label{eq:problemdef}
    d(\mathbf{z}_1,\mathbf{z}_2) = d(A(\mathbf{z}_1), A(\mathbf{z}_2))
\end{equation}
\end{problem}

The inclusion of `meaningful' in the problem definition emphasizes that solutions can be constructed that satisfy Eq.~\ref{eq:problemdef} without being of particular value, e.g.\@ the trivial metric
\begin{align}
  d(\mathbf{z}_1,\mathbf{z}_2) = \mathbb{I}(\mathbf{z}_1 \neq \mathbf{z}_2)
\end{align}
is identifiable, but reveals little about latent similarities.
Instead, we want the distance to reflect and respect the underlying mechanisms behind the observed data.  
\section{Main results}
\label{results}


\paragraph{Strategy and results at a glance.}
In the following, we show that distances, angles, volumes, and more, can be identified in latent variable models that satisfy the weak assumptions in the previous section. Our proof strategy is to connect \emph{indeterminacy transformations} from the identifiability literature with \emph{charts} from the differential geometry literature. Once this connection is in place, our results easily follow. Furthermore, we use the connection to show that identifying Euclidean distances in the latent space is either impossible or requires forcing the decoder to have zero curvature.
%
Below we present results with proof sketches and leave details to Appendices~\ref{appendix:diffgeom} and \ref{appendix:proofs}. 

\subsection{Identifiability via geometry}

We begin by focusing on the family of decoders $\mathcal{F}$ and analyzing the properties of their image.
\begin{lemma}\label{thrm:f_is_embedding_main}
   Let $\mathcal{Z}$ and $\mathcal{D}$ be two smooth manifolds and $f \in \mathcal{F}$, then $f$ is a smooth embedding and $f(\mathcal{Z})\subset \mathcal{D}$ is a submanifold in $\mathcal{D}$. In particular, $f:\mathcal{Z}\rightarrow f(\mathcal{Z})$ is a diffeomorphism.
\end{lemma}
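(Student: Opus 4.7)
The plan is to verify the three standard conditions for a smooth embedding, namely that $f$ is a smooth immersion, injective, and a topological embedding, and then invoke the standard theorem identifying the image of a smooth embedding with an embedded submanifold. The payoff, diffeomorphism onto the image, follows from the local immersion theorem.

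First I would unpack the hypotheses: $f$ is smooth by assumption, and \ref{ass:3} says $\mathrm{d}f$ has full column rank everywhere, which is precisely the definition of a smooth immersion. Assumption \ref{ass:2} gives injectivity. This already gets us to "smooth injective immersion," and the only remaining ingredient for being a smooth embedding is that $f$ be a topological embedding, i.e., a homeomorphism onto $f(\mathcal{Z})$ equipped with the subspace topology inherited from $\mathcal{D}$.

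The topological-embedding step is where compactness (\ref{ass:1}) does the work. Since $\mathcal{D}$ is a smooth manifold it is Hausdorff, and $f$ is continuous. A standard point-set-topology fact says that a continuous injection from a compact space into a Hausdorff space is a closed map, hence a homeomorphism onto its image. Applied here, $f: \mathcal{Z} \to f(\mathcal{Z})$ is a homeomorphism, so $f$ is a topological embedding and therefore a smooth embedding.

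Next I would invoke the standard consequence (e.g.\@ Lee, \emph{Introduction to Smooth Manifolds}, Proposition on smooth embeddings): the image of a smooth embedding is an embedded submanifold. This gives $f(\mathcal{Z}) \subset \mathcal{D}$ a canonical smooth structure for which the inclusion $\iota: f(\mathcal{Z}) \hookrightarrow \mathcal{D}$ is a smooth embedding. For the diffeomorphism claim, smoothness of $f: \mathcal{Z} \to f(\mathcal{Z})$ is automatic, while smoothness of the inverse follows from the local immersion (rank) theorem: around each $z \in \mathcal{Z}$ there exist charts in which $f$ looks like the linear inclusion $(z^1,\dots,z^n)\mapsto (z^1,\dots,z^n,0,\dots,0)$, and $f^{-1}$ is just the projection onto the first $n$ coordinates, which is smooth. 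Gluing these local inverses (they agree by injectivity of $f$) yields a global smooth inverse.

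I do not anticipate a genuine obstacle; the only subtlety worth flagging is that without compactness of $\mathcal{Z}$ a smooth injective immersion need not be an embedding (the classic irrational-slope line on the torus), so \ref{ass:1} is doing essential work and should be highlighted rather than treated as purely technical.
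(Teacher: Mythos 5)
Your proof is correct and follows essentially the same route as the paper's: establish that $f$ is a smooth injective immersion from \ref{ass:2}--\ref{ass:3}, use compactness (\ref{ass:1}) to upgrade this to a smooth embedding, and then invoke the standard fact that the image of a smooth embedding is an embedded submanifold with $f$ a diffeomorphism onto it. The paper simply cites Lee's Propositions 4.22 and 5.2 where you unpack the compact-to-Hausdorff and rank-theorem arguments directly, and your remark that compactness is doing genuine work (not ``purely technical'' as the paper labels \ref{ass:1}) is a fair observation.
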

\begin{proofsketch}
Smoothness and assumption~\ref{ass:3} lead to $f\in \mathcal{F}$ being a smooth map of constant rank (smooth immersion) while assumptions~\ref{ass:1} and \ref{ass:2} make sure that the image of $f$ does not self-intersect. Given these properties, the claim follows from standard results in differential geometry.
\end{proofsketch}
One consequence of Lemma~\ref{thrm:f_is_embedding_main} is that given two trained models $\theta_a$ and $\theta_b$ with equivalent marginal distributions  $P_{\theta_a} = P_{\theta_b}$, the resulting decoder functions $f_a,f_b$ act as reparametrizations of the same manifold $f_a(\mathcal{Z}) =f_b(\mathcal{Z})=\mathcal{M}$. In particular the tuples $(f_a^{-1},\mathcal{M})$ and $(f_b^{-1},\mathcal{M})$ can be seen as coordinate charts of the manifold $\mathcal{M}$. This situation is the main subject of our analysis and is illustrated in Figure~\ref{fig:drawing}. In what follows, we will label the latent spaces by the associated models such that for $\theta_a$ we will have $\mathcal{Z}_a$ and similarly for $\theta_b$. 

\begin{figure}
    \centering
    \footnotesize{Latent spaces~~~~~~~~~~~~~~~~~~~~~~~~~~~Observation space~~~~~~~~~}\\
    \includegraphics[width=0.9\linewidth]{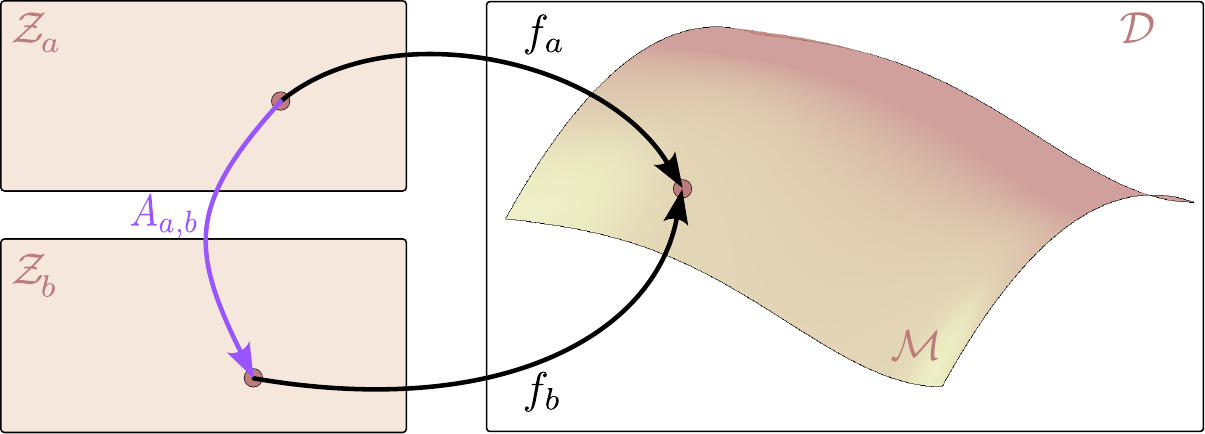} \\ \vspace{1mm}
    \vspace{-2mm}
    \caption{Decoders $f_a$ and $f_b$ parametrize the same manifold $\mathcal{M} \subset \mathcal{D}$ when $\theta_a=\left(f_a, P_{Z_a}\right)$ and $\theta_b=\left(f_b, P_{Z_b}\right)$ give the same marginal distributions $P_{\theta_a}=P_{\theta_b}$.\looseness=-1}
    \label{fig:drawing}
\end{figure}

We will go between the different charts (or latent spaces) by using \emph{generator transformations} in Definition~\ref{def:generator_transforms} that push and pull along the respective decoders. 
\begin{definition}
\label{def:generator_transforms}
  Given two equivalent parametrizations $\theta_a=\left(f_a, P_{Z_a}\right)$ and $\theta_b=\left(f_b, P_{Z_b}\right)$ of a model with $P_{\theta_a}=P_{\theta_b}$, we define the \emph{generator transformation} $A_{a, b}:\mathcal{Z}_a\rightarrow \mathcal{Z}_b$ is
    \begin{equation}
\label{eq:gen_transform}
    A_{a, b}(\mathbf{z}) = f_b^{-1}\circ f_a(\mathbf{z}), \qquad \text{for } \mathbf{z}\in \mathcal{Z}_a.
\end{equation}
\end{definition}
Lemma 2.1 in \citet{xi2023indeterminacy}  shows that any indeterminacy transformation $A \in \mathbf{A}(M)$ must be almost everywhere equal to the generator transformation. Whereas \citeauthor{xi2023indeterminacy} focus on proving this result and using it for characterizing identifiability issues in general, we use this construction to show that it preserves the geometric properties of the manifold and, in particular, that the geodesic distance function (formally defined in Eq.~\ref{eq:geodesic_dist_def}) is invariant w.r.t.\@ to the entire set $\mathbf{A}(M)$.
\begin{lemma} \label{thrm:gen_transf_is_diffeo_main}
    Given two equivalent parametrizations $\theta_a=\left(f_a, P_{Z_a}\right)$ and $\theta_b=\left(f_b, P_{Z_b}\right)$ of a model with $P_{\theta_a}=P_{\theta_b}$, the generator transformations $A_{a, b}(\mathbf{z})$ and $A^{-1}_{a, b}(\mathbf{z})=A_{b, a}(\mathbf{z})$ are diffeomorphisms.
\end{lemma}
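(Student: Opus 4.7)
The plan is to reduce this entirely to Lemma~\ref{thrm:f_is_embedding_main} together with assumption~\ref{ass:4}, since once we know each $f\in\mathcal{F}$ is a diffeomorphism onto its image, the generator transformation is literally a composition of diffeomorphisms. So the whole proof should be a short bookkeeping argument with essentially no analytic content beyond what Lemma~\ref{thrm:f_is_embedding_main} already provides.

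Concretely, I would first invoke Lemma~\ref{thrm:f_is_embedding_main} for both parametrizations to get that $f_a:\mathcal{Z}_a \to f_a(\mathcal{Z}_a)$ and $f_b:\mathcal{Z}_b \to f_b(\mathcal{Z}_b)$ are smooth embeddings, and in particular diffeomorphisms onto their images. Next I would use assumption~\ref{ass:4}, which guarantees $f_a(\mathcal{Z}_a) = f_b(\mathcal{Z}_b) = \mathcal{M}$, so that both maps share a common codomain. Hence $f_b^{-1}:\mathcal{M}\to \mathcal{Z}_b$ exists and is smooth, and
\begin{equation*}
A_{a,b} \;=\; f_b^{-1}\circ f_a : \mathcal{Z}_a \longrightarrow \mathcal{Z}_b
\end{equation*}
is a composition of two smooth maps with smooth inverses, hence a diffeomorphism. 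Its inverse is $(f_b^{-1}\circ f_a)^{-1} = f_a^{-1}\circ f_b = A_{b,a}$, which is a diffeomorphism by the same argument with the roles of $a$ and $b$ swapped.

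There is essentially no obstacle here: the only subtle point worth highlighting is that without assumption~\ref{ass:4} the claim would fail, because $f_b^{-1}$ would not even be defined on all of $f_a(\mathcal{Z}_a)$. The other place to be slightly careful is that $\mathcal{M}$ inherits the smooth structure of a submanifold of $\mathcal{D}$ (as stated in Lemma~\ref{thrm:f_is_embedding_main}), so ``diffeomorphism onto its image'' is meaningful; once this is acknowledged, smoothness of $f_b^{-1}$ on $\mathcal{M}$ is immediate from the embedding property. Everything else is formal.
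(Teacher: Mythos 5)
Your proposal is correct and matches the paper's own proof exactly: both reduce the claim to Lemma~\ref{thrm:f_is_embedding_main} (each $f$ is a diffeomorphism onto its image) combined with assumption~\ref{ass:4} (the images coincide), after which $A_{a,b}=f_b^{-1}\circ f_a$ is a composition of diffeomorphisms. The only difference is that the paper states this in one sentence while you spell out the bookkeeping, but the argument is the same.
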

\begin{proofsketch}
The result follows from the fact that a composition of diffeomorphisms is a diffeomorphism. The generator construction is only possible because of assumption~\ref{ass:4}.
\end{proofsketch}
From the perspective of differential geometry, if we are to collect all the coordinate charts of the form $(f^{-1},\mathcal{M})$ stemming from the indeterminacy transformations in $\mathbf{A}(M)$ into a smooth atlas, then Lemma~\ref{thrm:gen_transf_is_diffeo_main} tells us that the generator transforms play the of role of smooth transition maps between them, as illustrated in Fig.~\ref{fig:drawing}.

Lemma~\ref{thrm:f_is_embedding_main} tells us that the image of our decoder functions is a smooth manifold. Due to the `Existence of Riemannian Metrics' result by \citet{lee2003introduction}, it admits a \emph{Riemannian metric} $g$ that for each point $p$ on the manifold defines an inner product in its tangent space at $p$, denoted by $T_p\mathcal{M}$. The tuple $(\mathcal{M}, g)$ defines the Riemannian manifold structure that allows measurements on general smooth manifolds and is the theoretical foundation for our methodology.

For general data manifolds, there is neither a unique nor a known metric $g$. However, given a decoder function $f$ and a chosen metric (often Euclidean) $g^{\mathcal{D}}$ in the ambient space $\mathcal{D}$, we can construct the \emph{pullback} metric $g^f$ in the latent space $\mathcal{Z}$ by pulling the ambient metric back to the latent space using the decoder.

\begin{definition}
    \label{def:pullback_metric} Let $\mathcal{Z}$ be a smooth manifold and $(\mathcal{D},g^{\mathcal{D}})$ be a Riemannian manifold. Furthermore, let $f: \mathcal{Z} \rightarrow \mathcal{M} \subseteq \mathcal{D}$ be a map satisfying assumptions~\ref{ass:1}-\ref{ass:3}, the pullback metric $f^*g^{\mathcal{D}}$ on $\mathcal{Z}$ is defined as:
\begin{equation}
\label{eq:pullback}
    (f^*g^{\mathcal{D}})_p(u, v) = g^{\mathcal{D}}_{f(p)}(\mathrm{d}f_p(u), \mathrm{d}f_p(v))
\end{equation}
for any tangent vectors $u, v \in T_p\mathcal{Z}$. In Eq.~\ref{eq:pullback}, $g^{\mathcal{D}}_{f(p)}$ means that we use ambient metric evaluated in the tangent space $T_{f(p)}\mathcal{M}$. The notation $\mathrm{d}f_p(u)$ means that the differential map of $f$ at $p\in \mathcal{Z}$ maps the vector $u \in T_p\mathcal{Z}$ to $f(u) \in T_{f(p)}\mathcal{M}$. We will denote the pullback metric as $g^{f}=f^*g^\mathcal{D}$ for shorter notation and let the domain of it be implicit from the definition of $f$.
\end{definition}

The result of this important construction is that:
\begin{itemize}
    \item it allows us to construct a Riemannian metric on $\mathcal{M}$ that respect the intrinsic properties of the Riemannian manifold $(\mathcal{M},g)$. In this setting, the pullback metric $g^f$ represents some intrinsic $g$ in the coordinates defined by $\mathcal{Z}$ and $f$.
    \item we can make all the measurements from the latent space $\mathcal{Z}$ using $g^f$ as this construction makes the Riemannian manifolds $(\mathcal{Z},g^f)$ and $(\mathcal{M},g)$ the same, from a geometric perspective. Thus, we can concentrate our attention on $(\mathcal{Z},g^f)$, while being consistent with $(\mathcal{M},g)$ without worrying about $g$.
\end{itemize}
The pullback metric merely measures the length of a latent curve by first decoding the curve and measuring its length according to the data space metric. This is a quite `meaningful' metric in line with the requirements of Problem~\ref{problem}.

In the framework of pullback metrics defined by different decoders that span the same manifold, the generator transformations that comprise the space of indeterminacy transformations are isometries that preserve angles, length of curves, surface areas, and volumes on the manifold. 
\begin{theorem}\label{thrm:gentransforms_isometries}
    Let $\theta_a=\left(f_a, P_{Z_a}\right)$ and $\theta_b=\left(f_b, P_{Z_b}\right)$ with $P_{\theta_a}=P_{\theta_b}$and let $(\mathcal{Z}_a, g^{f_a})$ and $(\mathcal{Z}_b,g^{f_b})$ be the associated Riemannian manifolds, then the generator transform is an isometry and it holds that:
    \begin{equation}
    \label{eq:isometry}
        \left(A_{a, b}\right)^* g^{f_b}=g^{f_a}
    \end{equation}
    Thus, making $(\mathcal{Z}_a,g^{f_a})$ and $(\mathcal{Z}_b,g^{f_b})$ isometric. This makes Riemannian geometric properties such as lengths of curves, angles, volumes, areas, Ricci curvature tensor, geodesics, parallel transport, and the exponential map identifiable. 
\end{theorem}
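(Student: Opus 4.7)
The plan is to reduce the claim to a one-line application of the functoriality of the pullback. By Definition~\ref{def:generator_transforms}, $A_{a,b} = f_b^{-1} \circ f_a$, so composing with $f_b$ on the left yields the clean identity $f_a = f_b \circ A_{a,b}$. Lemma~\ref{thrm:f_is_embedding_main} guarantees this composition is well-defined as a smooth map $\mathcal{Z}_a \to \mathcal{M}$, and Lemma~\ref{thrm:gen_transf_is_diffeo_main} guarantees $A_{a,b}$ is a diffeomorphism, so all pullbacks below make sense as genuine Riemannian metrics rather than degenerate tensor fields.

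The first step is to invoke the standard functoriality property of the pullback of covariant tensors: for smooth maps $\phi$ and $\psi$ and a covariant $2$-tensor $T$, one has $(\psi \circ \phi)^* T = \phi^* (\psi^* T)$. Applying this to $f_a = f_b \circ A_{a,b}$ with $T = g^{\mathcal{D}}$ gives
\begin{equation}
g^{f_a} \;=\; f_a^* g^{\mathcal{D}} \;=\; (f_b \circ A_{a,b})^* g^{\mathcal{D}} \;=\; A_{a,b}^* (f_b^* g^{\mathcal{D}}) \;=\; A_{a,b}^* g^{f_b},
\end{equation}
which is exactly Eq.~\ref{eq:isometry}. To make this fully self-contained I would briefly unpack what the identity means pointwise: for $p \in \mathcal{Z}_a$ and $u, v \in T_p \mathcal{Z}_a$,
\begin{equation}
(A_{a,b}^* g^{f_b})_p(u, v) \;=\; g^{f_b}_{A_{a,b}(p)}\!\bigl(\mathrm{d}(A_{a,b})_p(u), \mathrm{d}(A_{a,b})_p(v)\bigr),
\end{equation}
and then apply the chain rule $\mathrm{d}(f_a)_p = \mathrm{d}(f_b)_{A_{a,b}(p)} \circ \mathrm{d}(A_{a,b})_p$ to see that both sides equal $g^{\mathcal{D}}_{f_a(p)}(\mathrm{d}(f_a)_p(u), \mathrm{d}(f_a)_p(v))$. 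This unpacking also confirms that $A_{a,b}$ is an isometry between $(\mathcal{Z}_a, g^{f_a})$ and $(\mathcal{Z}_b, g^{f_b})$ in the standard Riemannian sense.

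For the final sentence of the theorem, I would appeal to the fact that all of the listed quantities (curve length, angles, volume forms, Ricci tensor, geodesic equation, Levi-Civita connection, parallel transport, exponential map) are constructed canonically from the metric tensor alone and are therefore preserved by any isometry; this is standard and I would cite \citet{lee2003introduction}. Concretely, pulling back the length functional along $A_{a,b}$ shows that for a curve $\gamma$ in $\mathcal{Z}_a$ its length under $g^{f_a}$ equals the length of $A_{a,b} \circ \gamma$ under $g^{f_b}$, and analogous statements hold for volume via the pulled-back volume form and for angles via the inner product identity above.

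The main obstacle is essentially notational rather than mathematical: one must be careful that the inverses $f_a^{-1}$ and $f_b^{-1}$ are only defined on the common image $\mathcal{M}$, which is where assumption~\ref{ass:4} is doing its work and where Lemma~\ref{thrm:gen_transf_is_diffeo_main} is needed to certify smoothness. Once those technicalities are in place, the proof is genuinely just the chain rule plus functoriality of the pullback. The identifiability statement for indeterminacy transformations then follows because Lemma~2.1 of \citet{xi2023indeterminacy} guarantees every $A \in \mathbf{A}(M)$ agrees almost everywhere with a generator transform, and smoothness upgrades this to pointwise equality on $\mathcal{Z}$.
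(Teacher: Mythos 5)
Your proposal is correct and takes essentially the same approach as the paper: the paper's proof verifies $(A_{a,b})^* g^{f_b} = g^{f_a}$ by the explicit pointwise chain-rule computation that you present as the "unpacking," whereas you invoke pullback functoriality $(\psi\circ\phi)^*T = \phi^*(\psi^*T)$ applied to $f_a = f_b\circ A_{a,b}$ as a single abstract step — mathematically identical. The paper likewise cites a standard textbook (O'Neill rather than Lee) for isometry-invariance of the listed Riemannian quantities, and defers the connection to the task-identifiability framework (including the use of Lemma~2.1 of \citet{xi2023indeterminacy}, which you correctly anticipate) to the proof of the geodesic-distance theorem.
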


\begin{proofsketch}
    First, we show that Eq.~\ref{eq:isometry} is satisfied, establishing the isometry property, then we refer to results in Riemannian geometry to establish the isometric invariance of a particular property. To obtain identifiability, each property is expressed in terms of a task from Section~\ref{sec:background} and Definition~\ref{def:task_identifiability} is shown to be satisfied.
\end{proofsketch}

\begin{figure}[t]
    \centering
    \includegraphics[width=0.9\linewidth]{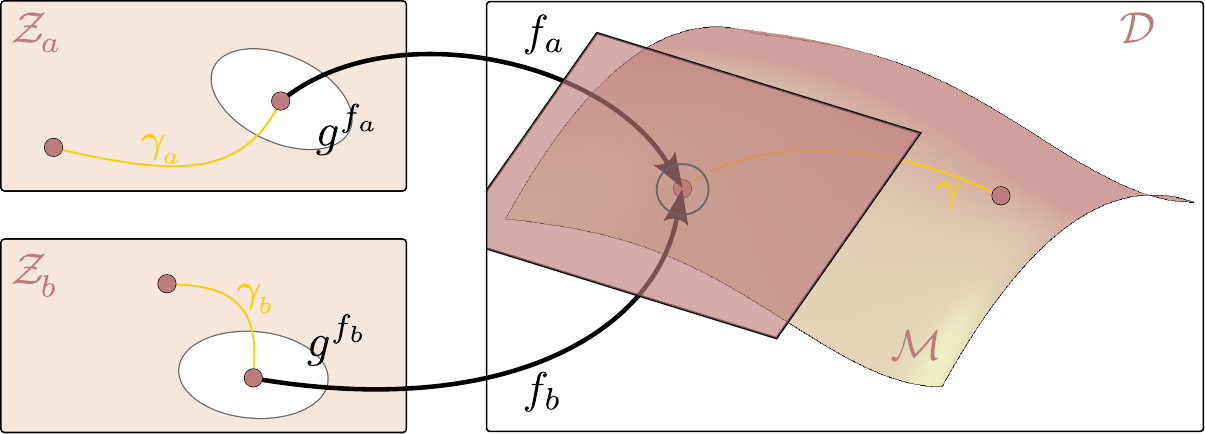}
    \caption{Pullback metrics assign a local inner product to latent spaces corresponding to measuring \emph{along} the manifold spanned by the decoder. In the left panels, the white ellipsis corresponds to unit circles under the pullback metric, corresponding to a local Euclidean metric in the observation space $\mathcal{D}$. Geodesics (yellow curves) minimize length according to the pullback metric, corresponding to minimizing the length of the decoded curve along the manifold.}
    \label{fig:manifold2}
\end{figure}
To solve Problem~\ref{problem} we use the \emph{geodesic distance} from Definition~\ref{def:geadeisc_distance} and show that it is identifiable in Theorem~\ref{thrm:identifiable_dist_func}.
\begin{definition}
    \label{def:geadeisc_distance}
    Let $(\mathcal{Z}_a,g^{f_a})$ be a Riemannian manifold, then for $\mathbf{z}_1,\mathbf{z}_2 \in \mathcal{Z}$ we define the geodesic distance function
    \begin{equation}
        \label{eq:geodesic_dist_def}
        d_{g^{f_a}}(\mathbf{z}_1,\mathbf{z}_2)= \inf_{\gamma} \int_0^T | \gamma^{\prime}(t) |_{g^{f_a}} dt
    \end{equation}
    where $\gamma:[0,T]\rightarrow \mathcal{Z}_a$ is a latent curve from $\mathbf{z}_1$ to $\mathbf{z}_2$. 
\end{definition}
Figure~\ref{fig:manifold2} illustrates how the geodesic distance measures the length of the shortest curve (geodesic) under the pullback metric. This is equivalent to finding the shortest curve \emph{along} the manifold spanned by a decoder.

\begin{theorem}
    \label{thrm:identifiable_dist_func}
     Let $\theta_a=\left(f_a, P_{Z_a}\right)$ and $\theta_b=\left(f_b, P_{Z_b}\right)$ with $P_{\theta_a}=P_{\theta_b}$and let $A_{a, b}$ be the generator transform between the parameters. Furthermore, let $(\mathcal{Z}_a, g^{f_a})$ and $(\mathcal{Z}_b,g^{f_b})$ be the associated Riemannian manifolds. Then, the geodesic distance between $\mathbf{z}_1$ and $\mathbf{z}_2$ is identifiable and
    \begin{equation}
        \label{eq:geodesic_dist}
        d_{g^{f_a}}(\mathbf{z}_1,\mathbf{z}_2)= d_{g^{f_b}}(A_{a, b}(\mathbf{z}_1),A_{a, b}(\mathbf{z}_2))
    \end{equation}
for some $\mathbf{z}_1,\mathbf{z}_2 \in \mathcal{Z}_a$ be two points in the latent space that correspond to some $\mathbf{x}_1,\mathbf{x}_2 \in \mathcal{M}$ on the manifold.
\end{theorem}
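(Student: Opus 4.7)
The plan is to reduce everything to Theorem~\ref{thrm:gentransforms_isometries}, which has already established that the generator transform $A_{a,b}$ is an isometry between the Riemannian manifolds $(\mathcal{Z}_a, g^{f_a})$ and $(\mathcal{Z}_b, g^{f_b})$. Since the geodesic distance in Definition~\ref{def:geadeisc_distance} is an infimum of curve lengths, and isometries preserve curve lengths, the bulk of the work will be a change-of-variables argument between the two admissible curve families, followed by a verification that the resulting distance indeed fits the identifiable task framework of Definition~\ref{def:task_identifiability}.

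First I would fix $\mathbf{z}_1,\mathbf{z}_2\in\mathcal{Z}_a$ and introduce the two sets of admissible paths: $\Gamma_a$, the piecewise smooth curves $\gamma:[0,T]\to\mathcal{Z}_a$ from $\mathbf{z}_1$ to $\mathbf{z}_2$, and $\Gamma_b$, the analogous curves in $\mathcal{Z}_b$ from $A_{a,b}(\mathbf{z}_1)$ to $A_{a,b}(\mathbf{z}_2)$. By Lemma~\ref{thrm:gen_transf_is_diffeo_main} the generator transform $A_{a,b}$ is a diffeomorphism, so $\gamma\mapsto A_{a,b}\circ\gamma$ is a bijection $\Gamma_a\to\Gamma_b$ (its inverse is post-composition with $A_{b,a}$). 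The isometry identity $(A_{a,b})^*g^{f_b}=g^{f_a}$ from Eq.~\ref{eq:isometry}, applied together with the chain rule, gives $|(A_{a,b}\circ\gamma)'(t)|_{g^{f_b}} = |\gamma'(t)|_{g^{f_a}}$ pointwise in $t$, so the two curves have identical length. Taking the infimum over $\Gamma_a$ on the left and over $\Gamma_b$ on the right then yields Eq.~\ref{eq:geodesic_dist}.

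Second, I would cast the geodesic distance as an identifiable task in the sense of Definition~\ref{def:task_identifiability}. The natural choice is the selection $s(\theta,\mathbf{x}_i)=f^{-1}(\mathbf{x}_i)$, which is well-defined on the shared manifold $\mathcal{M}$ by Lemma~\ref{thrm:f_is_embedding_main}, and the task $t(\theta,(\mathbf{x}_1,\mathbf{x}_2),(\mathbf{z}_1,\mathbf{z}_2))=d_{g^f}(\mathbf{z}_1,\mathbf{z}_2)$. The equivariance requirement $s(A\theta,\mathbf{x}_i)=A(s(\theta,\mathbf{x}_i))$ then reads $f_b^{-1}(\mathbf{x}_i)=A_{a,b}(f_a^{-1}(\mathbf{x}_i))$, which is precisely Eq.~\ref{eq:gen_transform} evaluated at $\mathbf{z}=f_a^{-1}(\mathbf{x}_i)$. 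The invariance requirement on $t$ is exactly the identity just proved, so both conditions of Eq.~\ref{eq:task_identifiability} hold.

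The main obstacle I anticipate is a small measure-theoretic nuisance rather than a geometric one: a general indeterminacy transformation $A\in\mathbf{A}(M)$ is only almost-everywhere equal to the generator transform (Lemma~2.1 of \citet{xi2023indeterminacy}), whereas the geodesic distance is defined pointwise. One must argue that, on the smooth manifold $\mathcal{Z}_a$ equipped with the pullback metric, the geodesic distance can be computed from the smooth representative $A_{a,b}$ without loss of generality, since the complement where $A$ disagrees with $A_{a,b}$ has measure zero and cannot contain any curve of positive length. Beyond that bookkeeping, no further calculation is required, because Theorem~\ref{thrm:gentransforms_isometries} already does the Riemannian-geometric heavy lifting.
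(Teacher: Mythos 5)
Your proposal is correct and follows essentially the same structure as the paper's proof: express the geodesic distance as an identifiable task (selection = inverse decoder, task = geodesic distance), reduce the invariance of the task to the isometry established in Theorem~\ref{thrm:gentransforms_isometries}, and check the two conditions of Definition~\ref{def:task_identifiability}. The selection-equivariance check you give, $f_b^{-1}(\mathbf{x}_i) = A_{a,b}(f_a^{-1}(\mathbf{x}_i))$, matches the paper's exactly.

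Where you genuinely differ is in the argument for the distance identity itself. The paper's proof picks a curve $\gamma_a$ attaining the infimum, invokes the fact that isometries carry geodesics to geodesics, and declares $A_{a,b}\circ\gamma_a$ to be a solution of the infimum on the other side. This implicitly assumes the infimum is attained (a Hopf--Rinow-type condition) and leaves the equality of the two infima to the reader, since a geodesic being a geodesic does not by itself say its length equals the distance. Your argument is tighter: you show that $\gamma\mapsto A_{a,b}\circ\gamma$ is a length-preserving bijection between the admissible curve families $\Gamma_a$ and $\Gamma_b$ (via $(A_{a,b})^*g^{f_b}=g^{f_a}$ plus the chain rule), so the infima over the two families coincide without ever needing a minimizer to exist. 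This is the standard proof that isometric manifolds are metrically isometric, and it buys rigor at no cost. You also flag the almost-everywhere nuisance in \citet{xi2023indeterminacy}'s Lemma~2.1 explicitly, which the paper only mentions in passing; your specific justification (``a measure-zero set cannot contain any curve of positive length'') isn't quite the right resolution, since the real issue is whether the chosen points $\mathbf{z}_1,\mathbf{z}_2$ themselves could land in the exceptional set rather than whether a curve lies inside it, but you are correct that some argument is needed here, and the paper does not supply one either. Overall, same route, somewhat more careful execution.
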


\begin{proofsketch}
First formulate the task of computing the geodesic distance in terms of Definition~\ref{def:task_identifiability}, then check the definition for the selection function by plugging in and for the task output by leveraging Theorem~\ref{thrm:gentransforms_isometries}.
\end{proofsketch}

\subsection{Identifiability of Euclidean distances}

Theorem~\ref{thrm:identifiable_dist_func} represents our solution to Problem~\ref{problem} and came as a result of treating the question of identifiability from the geometric perspective. While Section~\ref{sec:related_work} outlines alternative approaches to the same problem, in the following we show that these must necessarily impose implicit constraint of flatness on the models.
\begin{proposition}
\label{prop:flat_models}
Let $\mathcal{Z}=\mathbb{R}^n$ be the latent space and $(\mathcal{Z},g^f)$ the associated Riemannian manifold. Furthermore, let $g^{E_p}$ denote a metric tensor that is proportional to the Euclidean metric tensor $g^E$, then:
\begin{enumerate}
[label=\textbf{P\arabic*},ref=P\arabic*]
    \item If we choose $g^{E_p}$ as our metric in the latent space, that is equivalent to assuming $g^f=g^{E_p}$, then $(\mathcal{Z}, g^{E_p})$ can only be identifiable if the associated $f\in \mathcal{F}$ parametrize a flat manifold $\mathcal{M}$ within the ambient space $\mathcal{D}$, i.e.\@ $\mathcal{M}$ has zero curvature.\label{prop:1}
    \item If we choose the Euclidean distance to be identifiable, equivalent to assuming $g^f=g^{E}$, then \ref{prop:1} applies and the associated $f\in \mathcal{F}$ are such that the generator transforms are isometries of $\mathbb{R}^n$, i.e.\@ translations, rotations, or reflections. \label{prop:2}
\end{enumerate}
\end{proposition}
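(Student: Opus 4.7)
The plan is to combine Lemma~\ref{thrm:f_is_embedding_main} and Theorem~\ref{thrm:gentransforms_isometries} with two elementary facts from Riemannian geometry: first, a constant positive rescaling of a flat metric remains flat; second, the Riemann curvature tensor is preserved under Riemannian isometries. With these pieces in place, both claims reduce to short verifications.

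For \ref{prop:1}, I would begin by recording that $g^{E_p} = \lambda g^E$ with $\lambda > 0$ constant has identically vanishing Riemann curvature tensor, because a constant rescaling of a metric leaves the Christoffel symbols invariant and hence does not alter the curvature. Thus $(\mathcal{Z}, g^{E_p})$ is a flat Riemannian manifold. Next, by Lemma~\ref{thrm:f_is_embedding_main} every $f \in \mathcal{F}$ is a diffeomorphism from $\mathcal{Z}$ onto the submanifold $\mathcal{M}$, and by Definition~\ref{def:pullback_metric} this diffeomorphism is a Riemannian isometry between $(\mathcal{Z}, g^f)$ and $(\mathcal{M}, g^{\mathcal{M}})$, where $g^{\mathcal{M}}$ is the metric that $\mathcal{M}$ inherits as a submanifold of $(\mathcal{D}, g^{\mathcal{D}})$. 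Under the hypothesis $g^f = g^{E_p}$, this exhibits $(\mathcal{M}, g^{\mathcal{M}})$ as isometric to the flat manifold $(\mathcal{Z}, g^{E_p})$; since isometries preserve the full Riemann tensor, $\mathcal{M}$ must have zero curvature, which is the desired flatness.

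For \ref{prop:2}, I would apply Theorem~\ref{thrm:gentransforms_isometries} with the stronger hypothesis $g^f = g^E$. For every pair of equivalent parametrizations $\theta_a, \theta_b$ we then get $(A_{a,b})^* g^E = g^E$, so each generator transform is a smooth self-isometry of Euclidean space. I then invoke the classical rigidity of Euclidean isometries: every isometry of $(\mathbb{R}^n, g^E)$ is of the form $x \mapsto Ox + b$ with $O \in O(n)$ and $b \in \mathbb{R}^n$, and hence decomposes as a composition of rotations, reflections, and translations. Flatness of $\mathcal{M}$ is inherited from \ref{prop:1} specialised to $\lambda = 1$.

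I do not foresee a substantive obstacle; the argument is a direct chain of invocations of Theorem~\ref{thrm:gentransforms_isometries} together with standard Riemannian-geometric facts. The only point of mild care is reconciling the statement $\mathcal{Z} = \mathbb{R}^n$ with the compactness assumption~\ref{ass:1}: since flatness is pointwise and Euclidean rigidity is local, the same reasoning applies verbatim when $\mathcal{Z}$ is replaced by a full-dimensional compact subset of $\mathbb{R}^n$, so interpreting the hypothesis in that sense matches the framework of Section~\ref{results}.
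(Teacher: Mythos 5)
Your proof is correct and follows essentially the same route as the paper's (very terse) proof: both arguments for \ref{prop:1} observe that a constant (scalar multiple of Euclidean) metric has vanishing Christoffel symbols hence zero curvature, and use that $f$ is an isometry onto $(\mathcal{M}, g^{\mathcal{M}})$ to transfer flatness; both arguments for \ref{prop:2} apply Theorem~\ref{thrm:gentransforms_isometries} to conclude the generator transforms are self-isometries of $(\mathbb{R}^n, g^E)$ and then invoke the standard classification of Euclidean isometries as rigid motions. Your version spells out the intermediate steps the paper leaves implicit, and your closing remark about the mild tension between ``$\mathcal{Z}=\mathbb{R}^n$'' in the proposition and the compactness assumption~\ref{ass:1} is a fair observation about the paper itself; the resolution you give (flatness and rigidity are local, so read $\mathcal{Z}$ as a full-dimensional compact subset) is the natural one.
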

\begin{proof}
   Distance measures proportional to the Euclidean distance measure are characterized by the pullback metric $g^f$ being constant everywhere. If $g^f$ is constant everywhere, its directional derivatives vanish and the curvature is zero. The second point follows from Theorem~\ref{thrm:gentransforms_isometries} and standard linear algebra, e.g.\@ \citet{friedberg2014linear}.
\end{proof}

\subsection{Main takeaways}
Our results can be summarized by the following takeaways:
\begin{itemize}
    \item The Riemannian metric space of a deep latent variable model is identifiable (Theorem~\ref{thrm:identifiable_dist_func}) making distance measurements in the latent space identifiable. This solves Problem~\ref{problem}.
    \item Riemannian geometry properties of the learned manifold are identifiable (Theorem~\ref{thrm:gentransforms_isometries}). Examples beyond distances include angles, volumes, and more. Jointly these provide a rich language for probabilistic data analysis in the latent space.
    \item Using Euclidean distances in the latent space is either not identifiable or must come at the cost of imposing flatness constraints on the model (Proposition~\ref{prop:flat_models}).
    \item Any task whose identifiability boils down to the identifiability of the Riemannian metric is identifiable if the properties of the manifold allow. 

\end{itemize}

\textbf{To exemplify the last point}, consider the Fr{\'e}chet mean that generalizes the well-known mean to manifolds \citep{pennec2006intrinsic}. This is obtained by finding the point with minimal average squared distance to the data,
\begin{equation}
\label{eq:karcher}
  \mu_{\text{Fr{\' e}chet}} = \mathop {\text{argmin}} _{\mathbf{z}_1\in \mathcal{Z}}\sum _{i=1}^{N}d_{g^{f_a}}^{2}\left(\mathbf{z}_1,\mathbf{z}_{i}\right)
\end{equation}
As the mean is defined as an optimization problem, there might exist multiple means, which violates the usual notion of identifiability. First, it is worth noting that the solution set is identifiable, although, in practice, one usually only computes a single optimum. In some situations, this singleton can, however, be identified based on properties of the manifold $\mathcal{M}$. \citet{karcher1977riemannian} and \citet{kendall1990probability} provide uniqueness conditions that connect the radius of the smallest geodesic ball containing the data with the maximal curvature of the manifold. Importantly, these are, principally, testable conditions such that it should be feasible to computationally test if a computed mean is identifiable. \emph{Identifiability of some statistical quantities is, thus, within reach.}

\section{Experiments}
\label{sec:experiments}

Identifiability is a theoretical concept studied in the asymptotic regime of infinite data. Our experiments aim to demonstrate that this asymptotic property is practically exploitable in standard models using off-the-shelf methods.

Theorem~\ref{thrm:identifiable_dist_func} proves that geodesic distances are identifiable, while Euclidean ones are not. This suggests that geodesic distances should be more stable under model retraining than the Euclidean counterpart. To test this hypothesis, we train 30 models with different initial seeds and compute both Euclidean ($d_{E}$) and geodesic ($d_g$) distances in the latent space between 100 randomly chosen unique point pairs ($\left\{p_i=(\mathbf{x}_j,\mathbf{x}_k)\right\}_{i=1}^{100}, \mathbf{x}_j,\mathbf{x}_k\in \mathcal{M},\text{ }j\neq k$) from the test set. We emphasize that the pairs are the same across all models, allowing us to measure the variances of the distances across models.

\begin{SCfigure}[100][b]
    \includegraphics[width=0.55\linewidth]{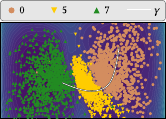}
    \caption{An example latent geodesic from a $\mathcal{M}$-flow model trained on three classes from \textsc{mnist}. The background color indicates model uncertainty.\looseness=-1}
    \label{fig:godesic}
\end{SCfigure}
 To assess the stability of the distance measures, we compute the coefficient of variation for each pair, which is evaluated as $\text{CV}(d_{*}(p_i))) = \sfrac{\sigma(d_{*}(p_i))}{\mu(d_{*}(p_i)))}$ with $\sigma$ and $\mu$ denoting standard deviation and the mean across 30 seeds, and $d_*$ is a placeholder for the distance measure used ($d_{E}$ or $d_g$). The coefficient of variation is a unitless measure of variability, where low values indicate less variability. We use this to compare the variability of Euclidean and geodesic distances.

We consider two models and four datasets. First, a model that satisfies all our assumptions, and second, a model where we disregard the injectivity assumption. 
To compute geodesics we parametrize them by a spline connecting two points in the latent space and minimize its energy. The discussion around Definition~\ref{def:geodesics} in Appendix~\ref{appendix:diffgeom} covers how this leads to a geodesic.
It has been noted that taking decoder uncertainty into account is key to good performance \citep{arvanitidis2021latentspaceodditycurvature, hauberg:only:2018} and we follow the ensemble-based approach from \citet{pmlr-v251-syrota24a}, implying that we train an ensemble of 8 decoders. Details on computing geodesics and experimental details are in Appendices~\ref{appendix:geodesics} and~\ref{appendix:experiments}. The code to reproduce our results is available in the project repository \href{https://github.com/mustass/identifiable-latent-metric-space}{GitHub}\footnote{https://github.com/mustass/identifiable-latent-metric-space}. \looseness=-1 

\textbf{\textsc{mnist} and \textsc{cifar10} with \ref{ass:1}-\ref{ass:4} satisfied.~~}
We use $\mathcal{M}$-flows \citep{brehmer2020flows} to construct a model with an injective decoder. 
We train this model on a 3-class subset of \textsc{mnist} \cite{deng2012mnist} with a 2D latent space for visualization purposes and full \textsc{cifar10} \cite{cifar10}. An example of a geodesic curve from digit 7 to digit 0 from the test set is visualized in Fig.~\ref{fig:godesic}. The geodesic crosses class boundaries where they are well-explored by the model and offer little uncertainty.

The left side of Fig.~\ref{fig7} shows a histograms of the coefficient of variation for the 100 point-pairs,  where we see a narrower distribution with both a lower mean and spread for geodesic distances. We perform a one-sided Student's $t$-test for the null hypothesis that geodesic distances vary less than the Euclidean (Table~\ref{tab:t_tests}) and find strong evidence for the hypothesis. This demonstrates that identifiability improves reliability.

\begin{SCfigure*}[1][t]
  \includegraphics[width=0.8\textwidth]{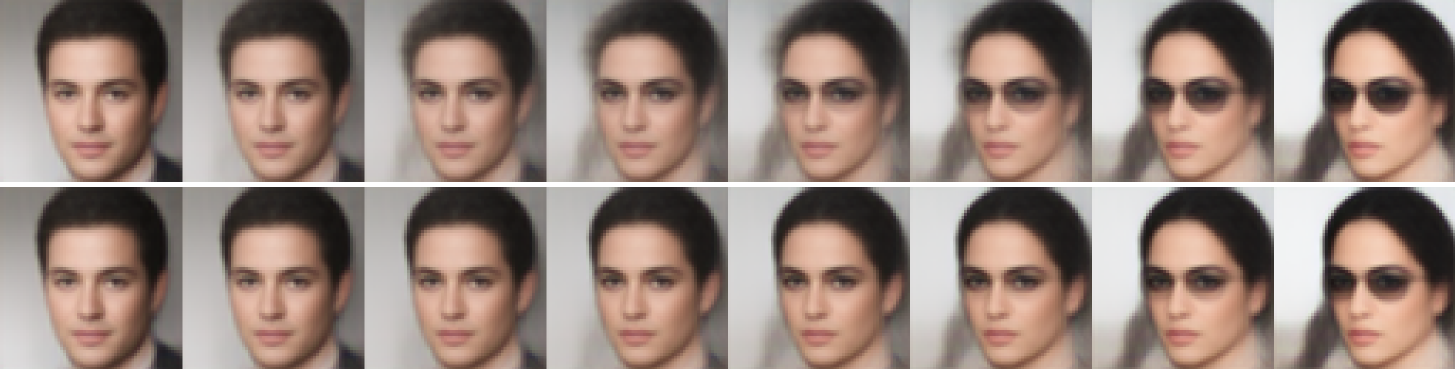} 
  \caption{Geodesic (top) and Euclidean (bottom) interpolations, are highly similar, but distances still differ significantly (Fig.~\ref{fig7}). \looseness=-1}
  \label{fig:celeba_interp}
\end{SCfigure*}

\textbf{\textsc{fmnist} and \textsc{celeba} with \ref{ass:2} relaxed and \ref{ass:3} verified.~~}
The general VAE model is known to be effective due to its flexible decoder parametrized by a neural network with arbitrary architecture that is not guaranteed to be globally injective (\ref{ass:2}) nor to have full rank Jacobian (\ref{ass:3}). We train this model on the \textsc{fmnist} \cite{fmnist} and \textsc{celeba} \cite{liu2015faceattributes} datasets  where the architecture is composed of convolutional and dense layers with ELU activation functions. The latent space dimension is 64 and we further employ Resize-Conv layers \cite{odena2016deconvolution} to improve image quality.
We follow the approach from \citet{8575533} to validate that the decoder Jacobian is, indeed, always full rank. An example geodesic is shown in Fig.~\ref{fig:celeba_interp} alongside a Euclidean counterpart, where we do not observe a significant difference between generated images.

Figs.~\ref{fig7:b} and~\ref{fig7:d} (right side) show that the coefficient of variation for geodesic distances has both lower mean and standard deviation than Euclidean distances. The one-sided Student's $t$-test again validates this observation (Table~\ref{tab:t_tests}). This demonstrates that geodesic distances remain more reliable than Euclidean ones even when the injectivity assumption may be violated.

\begin{table}
\centering
    \begin{tabular}{lcccc}
    \toprule
    & \textsc{mnist}  & \textsc{fmnist} & \textsc{cifar}10 & \textsc{celeba} \\ 
    \midrule
    $t$-statistic    & -8.64 & -16.75&-42.83 & -22.33 \\ 
    $p$-value        & 1.00  & 1.00 &1.00 & 1.00  \\ 
    \bottomrule
    \end{tabular}
    \caption{One-sided Student's $t$-test for the variability of geodesic versus Euclidean distances}
    \label{tab:t_tests}
\end{table}



\begin{figure*}[ht] 
  \begin{subfigure}[b]{0.5\linewidth}
    \centering
    \includegraphics[width=.7\linewidth]{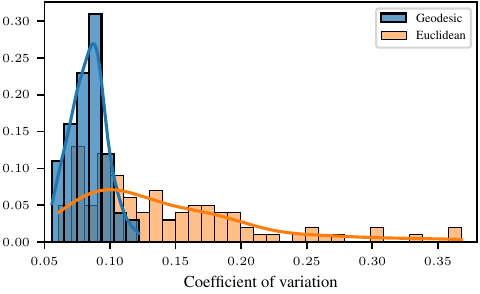}
    \caption{$\mathcal{M}$-flow model on \textsc{mnist}} 
    \label{fig7:a} 
    \vspace{1ex}
  \end{subfigure}
  \begin{subfigure}[b]{0.5\linewidth}
    \centering
    \includegraphics[width=.7\linewidth]{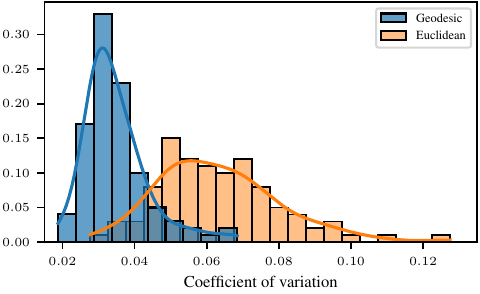} 
    \caption{VAE on \textsc{fmnist} } 
    \label{fig7:b} 
    \vspace{1ex}
  \end{subfigure} 
  \begin{subfigure}[b]{0.5\linewidth}
    \centering
    \includegraphics[width=.7\linewidth]{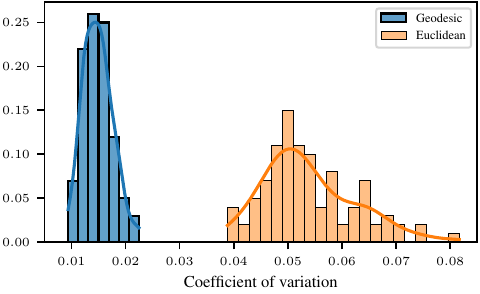} 
    \caption{$\mathcal{M}$-flow model on \textsc{cifar10} } 
    \label{fig7:c} 
  \end{subfigure}
  \begin{subfigure}[b]{0.5\linewidth}
    \centering
    \includegraphics[width=.7\linewidth]{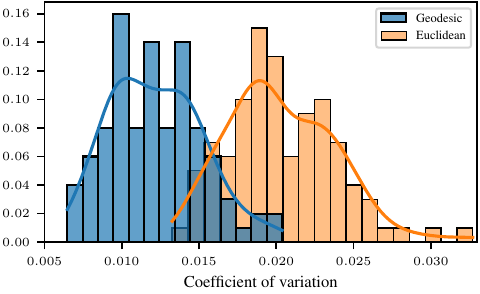} 
    \caption{VAE on \textsc{celeba} } 
    \label{fig7:d} 
  \end{subfigure} 
  \caption{Histograms of coefficients of variation for Euclidean and geodesic distances on \textsc{mnist} (\ref{fig7:a}), \textsc{fmnist}  (\ref{fig7:b}), \textsc{cifar10} (\ref{fig7:c}) and \textsc{celeba} (\ref{fig7:d}).  Geodesic distances vary significantly less, which is quantified in Table~\ref{tab:t_tests}.}
  \label{fig7} 
\end{figure*}
\section{Relations to existing work}
\label{sec:related_work}

We connect questions of \emph{identifiability} with results from \emph{differential geometry}. To our knowledge, no previous studies have formally connected these otherwise disjoint fields. Our work is, however, linked to a large body of prior works.\looseness=-1

\textbf{Identifiability} is well studied in the ICA or source separation literature \citep{ICA1, ICA2}. The analysis of identifiability in deep generative models stems from a connection between VAEs and ICA first noticed by \citet{khemakhem2020variationalautoencodersnonlinearica}. Many works either focus on formulating identifiability-enhancing constraints, typically placed on the decoder or latent distribution, or obtaining data from more diverse sources, e.g., multiple environments or multiple views \citep{kivva2022identifiabilitydeepgenerativemodels,hyvarinen2019nonlinearicausingauxiliary, gresele2019incompleterosettastoneproblem, pmlr-v119-locatello20a, locatello2019disentangling, shu2019weakly}. On the other hand, few works exist that characterize when, and what types of non-identifiability are acceptable in deployments of deep generative models without significant constraints.

\textbf{Latent space geometries} have been studied in various contexts to define more `meaningful' latent interpolations and distances \citep{Tosi:UAI:2014, arvanitidis2021latentspaceodditycurvature, hadi:rss:2021}. While \citet{hauberg:only:2018} alludes to connections between latent space geometries and identifiability, no formal statement has previously been made. Our work, thus, brings further mathematical justification to the algorithmic tools developed for latent space geometries. \citet{Detlefsen_2022} has previously demonstrated that latent space geometries recover evolutionary structures from models of proteins that are invisible under an Euclidean latent geometry. Our work adds credibility to these findings, which can now be seen as identifiable.

\textbf{Causality} strongly relies on identifiability as there is little point in recovering the `true' causal model if it is not guaranteed to be unique. The goal of \emph{causal representation learning} \citep{scholkopf2021toward} is closely linked to our quest for identifiable representations, or, at least, relationships between such. Our approach, however, is not immediately applicable to many questions of causality as these often amount to establishing \emph{independence} between variables \citep{peters2017elements}. Under the geometric lens, we do not have a canonical coordinate system in the latent space, which complicates splitting the latent space into factors to be considered independent.

\textbf{Transformations of the latent space} have proven valuable in various areas, including disentangled and equivariant learning, highlighting their mathematical and conceptual connections. \citet{higgins2018definitiondisentangledrepresentations, 1a27711c3c5e49fba0fd6b4431aebbe3, pmlr-v139-zhu21f} assume a disentangled latent space where transformations decompose into individual factors of variation and the goal is to find representations that respect this structure. This can be interpreted as equivariance of the generator transformations $A_{a,b}$ (Definition~\ref{def:generator_transforms}). Instead of enforcing specific properties of $A_{a,b}$, our theory analyzes the natural properties of the generator transformations, and we find that just by learning a generative model, $A_{a,b}$,
 will automatically respect the latent Riemannian geometry (Theorem~\ref{thrm:gentransforms_isometries}). Thus, our theory is valid regardless of whether a disentangled latent space exists in the sense of \citet{higgins2018definitiondisentangledrepresentations}.

\textbf{Disentanglement} can be seen as a `poor man's causality' \citep{detlefsen2019explicit}, where the key generating factors are sought to be axis-aligned in the latent space. This is generally known to be mathematically impossible \citep{pmlr-v97-locatello19a}, much in line with proposition~\ref{prop:flat_models}. Similar to identifiability, this difficulty has been addressed by inductive biases \citep{bouchacourt2018multi} or (weak) data labels \citep{pmlr-v119-locatello20a, locatello2019disentangling, shu2019weakly}. Empirical studies, however, hint that some key factors can be recovered in practice \citep{higgins2016beta, dittadi2020transfer, pmlr-v97-suter19a}. \citet{rolinek2019variational} noted that standard disentanglement pipelines only work when the variational distribution, parametrized by the encoder, is restricted to a diagonal covariance. This suggests that current results in disentanglement may be artifacts of a poor Bayesian approximation. Our work suggests that disentanglement is perhaps better achieved by looking for `geometric factors', such as \emph{principal geodesics} \citep{fletcher2004principal}.

\textbf{Relative representations} are explicitly constructed from the original representations and a set of anchor points to be invariant under angle-preserving transformations (using cosine similarity) \citep{moschella2022relative} or isometries (using a proxy for geodesic distances) \citet{yu2024connecting}.  Our approach instead provides mathematical guarantees on invariance under isometries of the original representations and establishes the link to statistical identifiability.


\section{Weaknesses and open questions}
Our work provides strong identifiability guarantees for essential quantities such as pairwise distances in contemporary generative models. However, our approach is not problem-free and we highlight some pitfalls to be aware of.

\textbf{Observation metrics matter.~~}
Our identifiable geometric structure relies on the idea of locally bringing the observation space metric into the latent space. This has many benefits but also raises the question of choosing the observation space metric. This choice will directly impact the final latent distances. Did we then replace one difficult problem (identifiability) with another (choosing observation metric)? We argue that most data is equipped with units of measurement, which greatly simplifies the task of picking a suitable metric in the observation space. Furthermore, being explicit about how data is compared improves the transparency of the conducted data analysis. Finally, we emphasize that we are \emph{not} proposing to bring the Euclidean distance from observation space into the latent space, but only to do so \emph{infinitesimally}, i.e.\@ measuring \emph{along} the manifold.

\textbf{Identifiability comes at a (computational) cost.~~}
Euclidean distances are cheap to compute, unlike the geodesic distances we consider. For any geodesic distance, we must solve an iterative optimization problem. Fortunately, this is a locally convex problem, that only requires estimating a limited number of parameters, so the computation is feasible. Yet it remains significantly more costly than computing a Euclidean distance. We argue that when identifiability is important, e.g.\@ in scientific knowledge discovery and hypothesis generation, the additional computational resources are well-spent. Proposition~\ref{prop:flat_models} effectively tells us that we must choose between (cheap) flat decoders or (expensive) curved ones: we cannot get the best of both worlds.

\textbf{Compression matters.~~}
Our current work rests on the assumption that the latent space has a dimension that is less than or equal to the data space dimension, i.e.\@ $\text{dim}(\mathcal{Z}) \leq \text{dim}(\mathcal{D})$. This remains the standard setting for representation learning and generative models. However, if we, for the sake of argument, wanted to identify distances between the weights of overparametrized neural networks, then our strategies would not directly apply. Early work has begun to appear on understanding the geometric structure of overparametrized models \citep{roy:reparam:2024}, which suggests that perhaps our approach can be adapted.

\textbf{Injectivity remains an issue.~~}
Most of our assumptions are purely technical and easily satisfied in practice. The key exception is Assumption~\ref{ass:2} stating that the decoder $f$ must be \emph{injective}. The decoders of contemporary models such as diffusion models and (continuous) normalizing flows (including those trained with flow matching) are injective and the assumption is satisfied. However, general neural networks cannot be expected to be injective, such that variational autoencoders and similar models are not identifiable `out of the box'. We have demonstrated that (injective) $\mathcal{M}$-flow architectures can be used for such models, and empirically we observe that the geodesic distance increases robustness in non-injective models. This gives hope that theoretical statements can be made without the injectivity assumption. One such path forward may be to consider notions of \emph{weak injectivity} \citep{kivva2022identifiabilitydeepgenerativemodels}, which is less restrictive and more easily satisfied in practice.

\section{Conclusion}
In this paper, we show that latent distances, and similar quantities, can be statistically identified in a large class of generative models that includes contemporary models without imposing unrealistic assumptions. This is a significant improvement over existing work that tends to impose additional restrictions on either model or training data. Our results are significant when seeking to understand the mechanisms that drive the true data-generating process, e.g.\@ in scientific discovery, where reliability is essential.

Practically, it is important to note that our strategy requires no changes to how models are trained. Our constructions are entirely \emph{post hoc}, making them broadly applicable.

Our proof strategy relies on linking identifiability with Riemannian geometry; a link that does not appear to have formally been made elsewhere. This link paves a way forward as many tools readily exist for statistical computations on manifolds. For example, Riemannian counterparts to \emph{averages} \citep{karcher1977riemannian}, \emph{covariances} \citep{pennec2006intrinsic}, \emph{principal components} \citep{fletcher2004principal}, \emph{Kalman filters} \citep{hauberg:jmiv:2013}, and much more readily exist. In principle, it is also possible to devise computational tests to determine if these statistics are identifiable for a given model and dataset. This is, however, future work.

\section*{Impact statement}

This paper improves our collective understanding of which aspects of a statistical model can be identified. As the theoretical understanding translates directly into algorithmic tools, our work has an impact potential beyond the theoretical questions. Being able to identify relationships between latent representations of data can aid in the process of scientific discovery as we increase the reliability of the data analysis. Our work can also help provide robustness to interpretations of neural networks and other statistical models, which may help in explainability efforts.


\section*{Acknowledgments}

This work was supported by a research grant (42062) from VILLUM FONDEN. This project received funding from the European Research Council (ERC) under the European Union’s Horizon research and innovation programme (grant agreement 101125993). The work was partly funded by the Novo Nordisk Foundation through the Center for Basic Machine Learning Research in Life Science (NNF20OC0062606).
JX is supported by an Natural Sciences and Engineering Research Council of Canada
(NSERC) Canada Graduate Scholarship.
BBR acknowledges the support of NSERC: RGPIN2020-04995, RGPAS-2020-00095. 
\bibliography{citations.bib}

\begin{thebibliography}{68}
\providecommand{\natexlab}[1]{#1}
\providecommand{\url}[1]{\texttt{#1}}
\expandafter\ifx\csname urlstyle\endcsname\relax
  \providecommand{\doi}[1]{doi: #1}\else
  \providecommand{\doi}{doi: \begingroup \urlstyle{rm}\Url}\fi

\bibitem[Arvanitidis et~al.(2018)Arvanitidis, Hansen, and Hauberg]{arvanitidis2021latentspaceodditycurvature}
Arvanitidis, G., Hansen, L.~K., and Hauberg, S.
\newblock Latent space oddity: on the curvature of deep generative models.
\newblock In \emph{International Conference on Learning Representations (ICLR)}, 2018.

\bibitem[Beik-Mohammadi et~al.(2021)Beik-Mohammadi, Hauberg, Arvanitidis, Neumann, and Rozo]{hadi:rss:2021}
Beik-Mohammadi, H., Hauberg, S., Arvanitidis, G., Neumann, G., and Rozo, L.
\newblock Learning riemannian manifolds for geodesic motion skills.
\newblock In \emph{Robotics: Science and Systems (RSS)}, 2021.

\bibitem[Bouchacourt et~al.(2018)Bouchacourt, Tomioka, and Nowozin]{bouchacourt2018multi}
Bouchacourt, D., Tomioka, R., and Nowozin, S.
\newblock Multi-level variational autoencoder: Learning disentangled representations from grouped observations.
\newblock In \emph{Proceedings of the AAAI Conference on Artificial Intelligence}, volume~32, 2018.

\bibitem[Brehmer \& Cranmer(2020)Brehmer and Cranmer]{brehmer2020flows}
Brehmer, J. and Cranmer, K.
\newblock Flows for simultaneous manifold learning and density estimation, 2020.

\bibitem[Chari \& Pachter(2023)Chari and Pachter]{chari2023specious}
Chari, T. and Pachter, L.
\newblock The specious art of single-cell genomics.
\newblock \emph{PLOS Computational Biology}, 19\penalty0 (8):\penalty0 e1011288, 2023.

\bibitem[Comon(1994)]{ICA1}
Comon, P.
\newblock Independent component analysis, a new concept?
\newblock \emph{Signal Processing}, 36\penalty0 (3):\penalty0 287--314, 1994.
\newblock ISSN 0165-1684.
\newblock \doi{https://doi.org/10.1016/0165-1684(94)90029-9}.
\newblock URL \url{https://www.sciencedirect.com/science/article/pii/0165168494900299}.
\newblock Higher Order Statistics.

\bibitem[Deng(2012)]{deng2012mnist}
Deng, L.
\newblock The mnist database of handwritten digit images for machine learning research.
\newblock \emph{IEEE Signal Processing Magazine}, 29\penalty0 (6):\penalty0 141--142, 2012.

\bibitem[Detlefsen \& Hauberg(2019)Detlefsen and Hauberg]{detlefsen2019explicit}
Detlefsen, N.~S. and Hauberg, S.
\newblock Explicit disentanglement of appearance and perspective in generative models.
\newblock In \emph{Advances in Neural Information Processing Systems (NeurIPS)}, 2019.

\bibitem[Detlefsen et~al.(2022)Detlefsen, Hauberg, and Boomsma]{Detlefsen_2022}
Detlefsen, N.~S., Hauberg, S., and Boomsma, W.
\newblock Learning meaningful representations of protein sequences.
\newblock \emph{Nature Communications}, 13\penalty0 (1), April 2022.
\newblock ISSN 2041-1723.
\newblock \doi{10.1038/s41467-022-29443-w}.
\newblock URL \url{http://dx.doi.org/10.1038/s41467-022-29443-w}.

\bibitem[Ding et~al.(2019)Ding, Zou, and Brooks~III]{ding2019deciphering}
Ding, X., Zou, Z., and Brooks~III, C.~L.
\newblock Deciphering protein evolution and fitness landscapes with latent space models.
\newblock \emph{Nature communications}, 10\penalty0 (1):\penalty0 5644, 2019.

\bibitem[Dinh et~al.(2015)Dinh, Krueger, and Bengio]{dinh2015nice}
Dinh, L., Krueger, D., and Bengio, Y.
\newblock Nice: Non-linear independent components estimation, 2015.

\bibitem[Dittadi et~al.(2020)Dittadi, Tr{\"a}uble, Locatello, W{\"u}thrich, Agrawal, Winther, Bauer, and Sch{\"o}lkopf]{dittadi2020transfer}
Dittadi, A., Tr{\"a}uble, F., Locatello, F., W{\"u}thrich, M., Agrawal, V., Winther, O., Bauer, S., and Sch{\"o}lkopf, B.
\newblock On the transfer of disentangled representations in realistic settings.
\newblock \emph{arXiv preprint arXiv:2010.14407}, 2020.

\bibitem[Durkan et~al.(2019)Durkan, Bekasov, Murray, and Papamakarios]{durkan2019neural}
Durkan, C., Bekasov, A., Murray, I., and Papamakarios, G.
\newblock Neural spline flows, 2019.

\bibitem[Fletcher et~al.(2004)Fletcher, Lu, Pizer, and Joshi]{fletcher2004principal}
Fletcher, P.~T., Lu, C., Pizer, S.~M., and Joshi, S.
\newblock Principal geodesic analysis for the study of nonlinear statistics of shape.
\newblock \emph{IEEE transactions on medical imaging}, 23\penalty0 (8):\penalty0 995--1005, 2004.

\bibitem[Friedberg et~al.(2014)Friedberg, Insel, and Spence]{friedberg2014linear}
Friedberg, S., Insel, A., and Spence, L.
\newblock \emph{Linear Algebra}.
\newblock Pearson Education, 2014.
\newblock ISBN 9780321998897.
\newblock URL \url{https://books.google.dk/books?id=KyB0DAAAQBAJ}.

\bibitem[Gresele et~al.(2019)Gresele, Rubenstein, Mehrjou, Locatello, and Schölkopf]{gresele2019incompleterosettastoneproblem}
Gresele, L., Rubenstein, P.~K., Mehrjou, A., Locatello, F., and Schölkopf, B.
\newblock The incomplete rosetta stone problem: Identifiability results for multi-view nonlinear ica, 2019.
\newblock URL \url{https://arxiv.org/abs/1905.06642}.

\bibitem[Hauberg(2018)]{hauberg:only:2018}
Hauberg, S.
\newblock Only bayes should learn a manifold.
\newblock 2018.

\bibitem[Hauberg(2024)]{DG_SH}
Hauberg, S.
\newblock Differential geometry for generative modeling, 2 2024.
\newblock URL \url{https://www2.compute.dtu.dk/~sohau//weekendwithbernie/Differential_geometry_for_generative_modeling.pdf}.

\bibitem[Hauberg et~al.(2013)Hauberg, Lauze, and Pedersen]{hauberg:jmiv:2013}
Hauberg, S., Lauze, F., and Pedersen, K.~S.
\newblock Unscented kalman filtering on riemannian manifolds.
\newblock \emph{Journal of Mathematical Imaging and Vision}, 46\penalty0 (1):\penalty0 103--120, May 2013.

\bibitem[He et~al.(2015)He, Zhang, Ren, and Sun]{he2015deepresiduallearningimage}
He, K., Zhang, X., Ren, S., and Sun, J.
\newblock Deep residual learning for image recognition, 2015.
\newblock URL \url{https://arxiv.org/abs/1512.03385}.

\bibitem[Higgins et~al.(2016)Higgins, Matthey, Pal, Burgess, Glorot, Botvinick, Mohamed, and Lerchner]{higgins2016beta}
Higgins, I., Matthey, L., Pal, A., Burgess, C., Glorot, X., Botvinick, M., Mohamed, S., and Lerchner, A.
\newblock beta-vae: Learning basic visual concepts with a constrained variational framework.
\newblock 2016.

\bibitem[Higgins et~al.(2018)Higgins, Amos, Pfau, Racaniere, Matthey, Rezende, and Lerchner]{higgins2018definitiondisentangledrepresentations}
Higgins, I., Amos, D., Pfau, D., Racaniere, S., Matthey, L., Rezende, D., and Lerchner, A.
\newblock Towards a definition of disentangled representations, 2018.
\newblock URL \url{https://arxiv.org/abs/1812.02230}.

\bibitem[Ho et~al.(2020)Ho, Jain, and Abbeel]{ho2020denoising}
Ho, J., Jain, A., and Abbeel, P.
\newblock Denoising diffusion probabilistic models.
\newblock \emph{Advances in neural information processing systems}, 33:\penalty0 6840--6851, 2020.

\bibitem[Hyvarinen et~al.(2019)Hyvarinen, Sasaki, and Turner]{hyvarinen2019nonlinearicausingauxiliary}
Hyvarinen, A., Sasaki, H., and Turner, R.~E.
\newblock Nonlinear ica using auxiliary variables and generalized contrastive learning, 2019.
\newblock URL \url{https://arxiv.org/abs/1805.08651}.

\bibitem[Hyvärinen \& Pajunen(1999)Hyvärinen and Pajunen]{ICA2}
Hyvärinen, A. and Pajunen, P.
\newblock Nonlinear independent component analysis: Existence and uniqueness results.
\newblock \emph{Neural Networks}, 12\penalty0 (3):\penalty0 429--439, 1999.
\newblock ISSN 0893-6080.
\newblock \doi{https://doi.org/10.1016/S0893-6080(98)00140-3}.
\newblock URL \url{https://www.sciencedirect.com/science/article/pii/S0893608098001403}.

\bibitem[Johnson et~al.(2016)Johnson, Alahi, and Fei-Fei]{prc}
Johnson, J., Alahi, A., and Fei-Fei, L.
\newblock Perceptual losses for real-time style transfer and super-resolution, 2016.
\newblock URL \url{https://arxiv.org/abs/1603.08155}.

\bibitem[Karcher(1977)]{karcher1977riemannian}
Karcher, H.
\newblock Riemannian center of mass and mollifier smoothing.
\newblock \emph{Communications on pure and applied mathematics}, 30\penalty0 (5):\penalty0 509--541, 1977.

\bibitem[Kendall(1990)]{kendall1990probability}
Kendall, W.~S.
\newblock Probability, convexity, and harmonic maps with small image i: uniqueness and fine existence.
\newblock \emph{Proceedings of the London Mathematical Society}, 3\penalty0 (2):\penalty0 371--406, 1990.

\bibitem[Khemakhem et~al.(2020)Khemakhem, Kingma, Monti, and Hyvärinen]{khemakhem2020variationalautoencodersnonlinearica}
Khemakhem, I., Kingma, D.~P., Monti, R.~P., and Hyvärinen, A.
\newblock Variational autoencoders and nonlinear ica: A unifying framework, 2020.
\newblock URL \url{https://arxiv.org/abs/1907.04809}.

\bibitem[Kingma \& Ba(2017)Kingma and Ba]{kingma2017adam}
Kingma, D.~P. and Ba, J.
\newblock Adam: A method for stochastic optimization, 2017.
\newblock URL \url{https://arxiv.org/abs/1412.6980}.

\bibitem[Kingma \& Welling(2013)Kingma and Welling]{VAE}
Kingma, D.~P. and Welling, M.
\newblock Auto-encoding variational bayes.
\newblock \emph{arXiv preprint arXiv:1312.6114}, 2013.

\bibitem[Kivva et~al.(2022)Kivva, Rajendran, Ravikumar, and Aragam]{kivva2022identifiabilitydeepgenerativemodels}
Kivva, B., Rajendran, G., Ravikumar, P., and Aragam, B.
\newblock Identifiability of deep generative models without auxiliary information, 2022.
\newblock URL \url{https://arxiv.org/abs/2206.10044}.

\bibitem[Klema \& Laub(1980)Klema and Laub]{SVD}
Klema, V. and Laub, A.
\newblock The singular value decomposition: Its computation and some applications.
\newblock \emph{IEEE Transactions on Automatic Control}, 25\penalty0 (2):\penalty0 164--176, 1980.
\newblock \doi{10.1109/TAC.1980.1102314}.

\bibitem[Kobak \& Berens(2019)Kobak and Berens]{thearttsne}
Kobak, D. and Berens, P.
\newblock The art of using t-sne for single-cell transcriptomics.
\newblock \emph{Nature Communications}, 2019.

\bibitem[Kress(2012)]{kress2012numerical}
Kress, R.
\newblock \emph{Numerical Analysis}.
\newblock Graduate Texts in Mathematics. Springer New York, 2012.
\newblock ISBN 9781461205999.
\newblock URL \url{https://books.google.dk/books?id=Jv_ZBwAAQBAJ}.

\bibitem[Krizhevsky et~al.()Krizhevsky, Nair, and Hinton]{cifar10}
Krizhevsky, A., Nair, V., and Hinton, G.
\newblock Cifar-10 (canadian institute for advanced research).
\newblock URL \url{http://www.cs.toronto.edu/~kriz/cifar.html}.

\bibitem[Lause et~al.(2024)Lause, Berens, and Kobak]{lause2024art}
Lause, J., Berens, P., and Kobak, D.
\newblock The art of seeing the elephant in the room: 2d embeddings of single-cell data do make sense.
\newblock \emph{bioRxiv}, 2024.

\bibitem[Lee(2003)]{lee2003introduction}
Lee, J.
\newblock \emph{Introduction to Smooth Manifolds}.
\newblock Graduate Texts in Mathematics. Springer, 2003.
\newblock ISBN 9780387954486.
\newblock URL \url{https://books.google.dk/books?id=eqfgZtjQceYC}.

\bibitem[Lipman et~al.(2022)Lipman, Chen, Ben-Hamu, Nickel, and Le]{lipman2022flow}
Lipman, Y., Chen, R.~T., Ben-Hamu, H., Nickel, M., and Le, M.
\newblock Flow matching for generative modeling.
\newblock \emph{arXiv preprint arXiv:2210.02747}, 2022.

\bibitem[Liu et~al.(2015)Liu, Luo, Wang, and Tang]{liu2015faceattributes}
Liu, Z., Luo, P., Wang, X., and Tang, X.
\newblock Deep learning face attributes in the wild.
\newblock In \emph{Proceedings of International Conference on Computer Vision (ICCV)}, December 2015.

\bibitem[Locatello et~al.(2019{\natexlab{a}})Locatello, Bauer, Lucic, Raetsch, Gelly, Sch{\"o}lkopf, and Bachem]{pmlr-v97-locatello19a}
Locatello, F., Bauer, S., Lucic, M., Raetsch, G., Gelly, S., Sch{\"o}lkopf, B., and Bachem, O.
\newblock Challenging common assumptions in the unsupervised learning of disentangled representations.
\newblock In Chaudhuri, K. and Salakhutdinov, R. (eds.), \emph{Proceedings of the 36th International Conference on Machine Learning}, volume~97 of \emph{Proceedings of Machine Learning Research}, pp.\  4114--4124. PMLR, 09--15 Jun 2019{\natexlab{a}}.
\newblock URL \url{https://proceedings.mlr.press/v97/locatello19a.html}.

\bibitem[Locatello et~al.(2019{\natexlab{b}})Locatello, Tschannen, Bauer, R{\"a}tsch, Sch{\"o}lkopf, and Bachem]{locatello2019disentangling}
Locatello, F., Tschannen, M., Bauer, S., R{\"a}tsch, G., Sch{\"o}lkopf, B., and Bachem, O.
\newblock Disentangling factors of variation using few labels.
\newblock \emph{arXiv preprint arXiv:1905.01258}, 2019{\natexlab{b}}.

\bibitem[Locatello et~al.(2020)Locatello, Poole, Raetsch, Sch{\"o}lkopf, Bachem, and Tschannen]{pmlr-v119-locatello20a}
Locatello, F., Poole, B., Raetsch, G., Sch{\"o}lkopf, B., Bachem, O., and Tschannen, M.
\newblock Weakly-supervised disentanglement without compromises.
\newblock In III, H.~D. and Singh, A. (eds.), \emph{Proceedings of the 37th International Conference on Machine Learning}, volume 119 of \emph{Proceedings of Machine Learning Research}, pp.\  6348--6359. PMLR, 13--18 Jul 2020.
\newblock URL \url{https://proceedings.mlr.press/v119/locatello20a.html}.

\bibitem[Moschella et~al.(2022)Moschella, Maiorca, Fumero, Norelli, Locatello, and Rodol{\`a}]{moschella2022relative}
Moschella, L., Maiorca, V., Fumero, M., Norelli, A., Locatello, F., and Rodol{\`a}, E.
\newblock Relative representations enable zero-shot latent space communication.
\newblock \emph{arXiv preprint arXiv:2209.15430}, 2022.

\bibitem[Odena et~al.(2016)Odena, Dumoulin, and Olah]{odena2016deconvolution}
Odena, A., Dumoulin, V., and Olah, C.
\newblock Deconvolution and checkerboard artifacts.
\newblock \emph{Distill}, 2016.
\newblock \doi{10.23915/distill.00003}.
\newblock URL \url{http://distill.pub/2016/deconv-checkerboard}.

\bibitem[O'Neill(1997)]{oneilgeom}
O'Neill, B.
\newblock \emph{Elementary Differential Geometry}.
\newblock Academic Press, 1997.
\newblock ISBN 9780125267458.
\newblock URL \url{https://books.google.dk/books?id=4uMAw3NwnmgC}.

\bibitem[Pennec(2006)]{pennec2006intrinsic}
Pennec, X.
\newblock Intrinsic statistics on riemannian manifolds: Basic tools for geometric measurements.
\newblock \emph{Journal of Mathematical Imaging and Vision}, 25:\penalty0 127--154, 2006.

\bibitem[Peters et~al.(2017)Peters, Janzing, and Sch{\"o}lkopf]{peters2017elements}
Peters, J., Janzing, D., and Sch{\"o}lkopf, B.
\newblock \emph{Elements of causal inference: foundations and learning algorithms}.
\newblock The MIT Press, 2017.

\bibitem[Rezende et~al.(2014)Rezende, Mohamed, and Wierstra]{rezende2014stochastic}
Rezende, D.~J., Mohamed, S., and Wierstra, D.
\newblock Stochastic backpropagation and approximate inference in deep generative models.
\newblock In \emph{International conference on machine learning}, pp.\  1278--1286. PMLR, 2014.

\bibitem[Riesselman et~al.(2018)Riesselman, Ingraham, and Marks]{riesselman2018deep}
Riesselman, A.~J., Ingraham, J.~B., and Marks, D.~S.
\newblock Deep generative models of genetic variation capture the effects of mutations.
\newblock \emph{Nature methods}, 15\penalty0 (10):\penalty0 816--822, 2018.

\bibitem[Rolinek et~al.(2019)Rolinek, Zietlow, and Martius]{rolinek2019variational}
Rolinek, M., Zietlow, D., and Martius, G.
\newblock Variational autoencoders recover pca directions (by accident).
\newblock In \emph{Proceedings IEEE Conf. on Computer Vision and Pattern Recognition}, pp.\  132, 2019.

\bibitem[Roy et~al.(2024)Roy, Miani, Ek, Hennig, Pförtner, Tatzel, and Hauberg]{roy:reparam:2024}
Roy, H., Miani, M., Ek, C.~H., Hennig, P., Pförtner, M., Tatzel, L., and Hauberg, S.
\newblock Reparameterization invariance in approximate bayesian inference.
\newblock In \emph{Neural Information Processing Systems (NeurIPS)}, 2024.

\bibitem[Schoenberg(1946)]{Schoenberg1946ContributionsTT}
Schoenberg, I.~J.
\newblock Contributions to the problem of approximation of equidistant data by analytic functions. part b. on the problem of osculatory interpolation. a second class of analytic approximation formulae.
\newblock \emph{Quarterly of Applied Mathematics}, 4:\penalty0 112--141, 1946.
\newblock URL \url{https://api.semanticscholar.org/CorpusID:125667988}.

\bibitem[Sch{\"o}lkopf et~al.(2021)Sch{\"o}lkopf, Locatello, Bauer, Ke, Kalchbrenner, Goyal, and Bengio]{scholkopf2021toward}
Sch{\"o}lkopf, B., Locatello, F., Bauer, S., Ke, N.~R., Kalchbrenner, N., Goyal, A., and Bengio, Y.
\newblock Toward causal representation learning.
\newblock \emph{Proceedings of the IEEE}, 109\penalty0 (5):\penalty0 612--634, 2021.

\bibitem[Shao et~al.(2018)Shao, Kumar, and Fletcher]{8575533}
Shao, H., Kumar, A., and Fletcher, P.~T.
\newblock { The Riemannian Geometry of Deep Generative Models }.
\newblock In \emph{2018 IEEE/CVF Conference on Computer Vision and Pattern Recognition Workshops (CVPRW)}, pp.\  428--4288, Los Alamitos, CA, USA, June 2018. IEEE Computer Society.
\newblock \doi{10.1109/CVPRW.2018.00071}.
\newblock URL \url{https://doi.ieeecomputersociety.org/10.1109/CVPRW.2018.00071}.

\bibitem[Shu et~al.(2019)Shu, Chen, Kumar, Ermon, and Poole]{shu2019weakly}
Shu, R., Chen, Y., Kumar, A., Ermon, S., and Poole, B.
\newblock Weakly supervised disentanglement with guarantees.
\newblock \emph{arXiv preprint arXiv:1910.09772}, 2019.

\bibitem[Suter et~al.(2019)Suter, Miladinovic, Sch{\"o}lkopf, and Bauer]{pmlr-v97-suter19a}
Suter, R., Miladinovic, D., Sch{\"o}lkopf, B., and Bauer, S.
\newblock Robustly disentangled causal mechanisms: Validating deep representations for interventional robustness.
\newblock In Chaudhuri, K. and Salakhutdinov, R. (eds.), \emph{Proceedings of the 36th International Conference on Machine Learning}, volume~97 of \emph{Proceedings of Machine Learning Research}, pp.\  6056--6065. PMLR, 09--15 Jun 2019.
\newblock URL \url{https://proceedings.mlr.press/v97/suter19a.html}.

\bibitem[Syrota et~al.(2024)Syrota, Moreno-Muñoz, and Hauberg]{pmlr-v251-syrota24a}
Syrota, S., Moreno-Muñoz, P., and Hauberg, S.
\newblock Decoder ensembling for learned latent geometries.
\newblock In Vadgama, S., Bekkers, E., Pouplin, A., Kaba, S.-O., Walters, R., Lawrence, H., Emerson, T., Kvinge, H., Tomczak, J., and Jegelka, S. (eds.), \emph{Proceedings of the Geometry-grounded Representation Learning and Generative Modeling Workshop (GRaM)}, volume 251 of \emph{Proceedings of Machine Learning Research}, pp.\  277--285. PMLR, 29 Jul 2024.
\newblock URL \url{https://proceedings.mlr.press/v251/syrota24a.html}.

\bibitem[Tabak \& Vanden-Eijnden(2010)Tabak and Vanden-Eijnden]{tabak2010density}
Tabak, E.~G. and Vanden-Eijnden, E.
\newblock Density estimation by dual ascent of the log-likelihood.
\newblock \emph{Communications in Mathematical Sciences}, 8\penalty0 (1):\penalty0 217--233, 2010.

\bibitem[Tasic et~al.(2018)Tasic, Yao, Graybuck, Smith, Nguyen, Bertagnolli, Goldy, Garren, Economo, Viswanathan, et~al.]{tasic2018shared}
Tasic, B., Yao, Z., Graybuck, L.~T., Smith, K.~A., Nguyen, T.~N., Bertagnolli, D., Goldy, J., Garren, E., Economo, M.~N., Viswanathan, S., et~al.
\newblock Shared and distinct transcriptomic cell types across neocortical areas.
\newblock \emph{Nature}, 563\penalty0 (7729):\penalty0 72--78, 2018.

\bibitem[Tipping \& Bishop(1999)Tipping and Bishop]{tipping1999probabilistic}
Tipping, M.~E. and Bishop, C.~M.
\newblock Probabilistic principal component analysis.
\newblock \emph{Journal of the Royal Statistical Society Series B: Statistical Methodology}, 61\penalty0 (3):\penalty0 611--622, 1999.

\bibitem[Tomczak(2024)]{tomczak2024deep}
Tomczak, J.~M.
\newblock Why deep generative modeling?
\newblock In \emph{Deep Generative Modeling}, pp.\  1--13. Springer, 2024.

\bibitem[Tosi et~al.(2014)Tosi, Hauberg, Vellido, and Lawrence]{Tosi:UAI:2014}
Tosi, A., Hauberg, S., Vellido, A., and Lawrence, N.~D.
\newblock Metrics for probabilistic geometries.
\newblock In \emph{The Conference on Uncertainty in Artificial Intelligence (UAI)}, Quebec, Canada, July 2014.

\bibitem[Wang et~al.(2021)Wang, Yue, Huang, Sun, and Zhang]{1a27711c3c5e49fba0fd6b4431aebbe3}
Wang, T., Yue, Z., Huang, J., Sun, Q., and Zhang, H.
\newblock Self-supervised learning disentangled group representation as feature.
\newblock In Ranzato, M., Beygelzimer, A., Dauphin, Y., Liang, P., and {Wortman Vaughan}, J. (eds.), \emph{Advances in Neural Information Processing Systems 34 - 35th Conference on Neural Information Processing Systems, NeurIPS 2021}, Advances in Neural Information Processing Systems, pp.\  18225--18240. Neural information processing systems foundation, 2021.
\newblock Publisher Copyright: {\textcopyright} 2021 Neural information processing systems foundation. All rights reserved.; 35th Conference on Neural Information Processing Systems, NeurIPS 2021 ; Conference date: 06-12-2021 Through 14-12-2021.

\bibitem[Xi \& Bloem-Reddy(2023)Xi and Bloem-Reddy]{xi2023indeterminacy}
Xi, Q. and Bloem-Reddy, B.
\newblock Indeterminacy in generative models: Characterization and strong identifiability, 2023.

\bibitem[Xiao et~al.(2017)Xiao, Rasul, and Vollgraf]{fmnist}
Xiao, H., Rasul, K., and Vollgraf, R.
\newblock Fashion-mnist: a novel image dataset for benchmarking machine learning algorithms, 2017.
\newblock URL \url{https://arxiv.org/abs/1708.07747}.

\bibitem[Yu et~al.(2024)Yu, Inal, and Fumero]{yu2024connecting}
Yu, H., Inal, B., and Fumero, M.
\newblock Connecting neural models latent geometries with relative geodesic representations.
\newblock In \emph{UniReps: 2nd Edition of the Workshop on Unifying Representations in Neural Models}, 2024.
\newblock URL \url{https://openreview.net/forum?id=gYTblmieFc}.

\bibitem[Zhu et~al.(2021)Zhu, Xu, and Tao]{pmlr-v139-zhu21f}
Zhu, X., Xu, C., and Tao, D.
\newblock Commutative lie group vae for disentanglement learning.
\newblock In Meila, M. and Zhang, T. (eds.), \emph{Proceedings of the 38th International Conference on Machine Learning}, volume 139 of \emph{Proceedings of Machine Learning Research}, pp.\  12924--12934. PMLR, 18--24 Jul 2021.
\newblock URL \url{https://proceedings.mlr.press/v139/zhu21f.html}.

\end{thebibliography}
\bibliographystyle{icml2025}
\newpage
\appendix
\onecolumn

\section{Riemannian Geometry}
\label{appendix:diffgeom}

This appendix covers the necessary concepts and results from differential geometry. For more full treatment, the reader is referred to excellent sources such as the book by \citet{lee2003introduction}.

The notation in this appendix is self-contained and separated from the main sections of the paper. Thus, the symbols that were used in the main body can be reused in the following but denote a different concept. The notation is introduced as we go along and should not lead to confusion. 

To gradually build up the necessary constructions bottom-up, we start from the concept of a topology that we will morph into a Riemannian manifold by progressively adding structure.

\subsection{Topological concepts}

A topology $\mathcal{T}$ on a set $X$ is a collection of subsets of $X$, that defines which sets are open in $X$. This gives rise to the notion of a neighborhood of a point $p\in X$ for an abstract set $X$. 

\begin{definition}\label{def:topology}
    
A topology $\mathcal{T}$ on $X$ is a collection of subsets of $\mathcal{P}(X)$ satisfying the following axioms:

\begin{enumerate}
    \item $X$ and $\varnothing$ are in $\mathcal{T}$.
    \item The union of any family of subsets in $\mathcal{T}$ are in $\mathcal{T}$.
    \item The intersection of any finite family of subsets in $\mathcal{T}$ are in $\mathcal{T}$.
\end{enumerate}
The tuple $(X, \mathcal{T})$ is called a topological space, and the elements of $\mathcal{T}$ are called open sets.
\end{definition}

To construct a topology, we need a basis.

\begin{definition}\label{def:basis}
    A basis for a topology $\mathcal{T}$ on a set $X$ is a collection $\mathcal{B}$ of subsets of $X$ such that:
    \begin{enumerate}
        \item For each $x\in X$, there is at least one $B\in \mathcal{B}$ such that $x\in B$.
        \item If $x\in B_1\cap B_2$ for $B_1, B_2 \in \mathcal{B}$, then there is a $B_3\in \mathcal{B}$ such that $x\in B_3\subseteq B_1\cap B_2$.
    \end{enumerate}
\end{definition}

In cases where we work with subsets of $X$, we can use the topology of $X$ to define a topology on the subsets.

\begin{definition}[Subspace Topology]\label{def:subspace_topology}
    Let $X$ be a topological space with topology $\mathcal{T}$ and $Y\subseteq X$. The subspace topology on $Y$ is defined as $\mathcal{T}_Y=\{Y\cap U|U\in \mathcal{T}\}$.
\end{definition}

The notion of a topological leads to the construction of topological manifold that is the prerequisite to a Riemannian manifold.

\begin{definition} \label{def:top_manifold}
    Suppose $M$ is a topological space. We say that $M$ is a topological manifold of dimension $n$ or a topological $n$-manifold if it has the following properties:

\begin{enumerate}
    \item $M$ is a Hausdorff space: for every pair of distinct points $p, q \in M$, there are disjoint open subsets $U, V \subseteq M$ such that $p \in U$ and $q \in V$.
    \item  $M$ is second-countable: there exists a countable basis for the topology of $M$.
    \item  $M$ is locally Euclidean of dimension $n$: each point of $M$ has a neighborhood that is homeomorphic to an open subset of $\mathbb{R}^n$.
\end{enumerate}
\end{definition}

Definition~\ref{def:top_manifold} specifies what we mean by a homeomorphism.

\begin{definition}\label{def:homeomorphism}
    Let $X$ and $Y$ be topological spaces, a map $F:X\rightarrow Y$ is 
    \begin{itemize}
        \item continuous if the preimage of every open set in $Y$ is open in $X$. $$\forall V\subseteq Y \text{ open } \Rightarrow F^{-1}(V)\subseteq X \text{ open}$$
        \item injective if $F(x)=F(y)$ implies $x=y$.
        \item surjective if for every $y\in Y$ there is an $x\in X$ such that $F(x)=y$.
        \item bijective if it is both injective and surjective.
        \item homeomorphism if it is bijective and both $F$ and $F^{-1}$ are continuous.
    \end{itemize}
    If $F$ is a homeomorphism, then $X$ and $Y$ are called homeomorphic. If $F$, on the other hand, is not bijective but only injective, we call $F$ a topological embedding. 
\end{definition}

\begin{definition}\label{def:top_embedding}
    Let $X$ and $Y$ be topological spaces; a continuous injective map $F:X\rightarrow Y$ is called a topological embedding if it is a homeomorphism onto its image $F(X) \subseteq Y$ in the subspace topology.
\end{definition}

The notion of local homeomorphism defined in the following is fundamental to the construction of a Riemannian manifold.

\begin{definition}\label{def:loc_homeomorphism}
   For two topological spaces $X$ and $Y$, a continuous map $F:X\rightarrow Y$ is called a local homeomorphism if every point $p\in X$ has a neighborhood $U\subseteq X$ such that $F(U)$ is open in $Y$ and $F$ restricts to a homeomorphism from $U$ to $F(U)$.
\end{definition}

Definition~\ref{def:top_manifold} requires the existence of a local homeomorphism from $M$ to $\mathbb{R}^n$ for every point $p \in M$ defined on a neighborhood $U$ of $p$. Each such local homeomorphism with the corresponding restriction $U \in M$ is called a coordinate chart. 

\begin{definition} \label{def:coordinate_chart}
    A coordinate chart on $M$ is a pair $(U,\phi)$ where $U$ is an open subset of M and $\phi: U \rightarrow \hat{U}$ is a homeomorphism from $U$ to an open subset $\hat{U}=\phi(U)\subseteq \mathbb{R}^n$.
\end{definition}

It follows from the definition of a topological manifold $M$ that every point $p \in M$ lies in the domain of some chart. $U$ is called coordinate domain and $\phi$ local coordinate map. The foundation of a smooth manifold is a maximal smooth atlas containing coordinate charts that are smoothly compatible. 

\begin{definition}\label{def:smooth_charts}
Let $M$ be a topological $n$-manifold. Two charts $(U,\phi)$ and $(V,\psi)$ are called smoothly compatible if $U\cap V = \varnothing$ or their transition map $$\psi \circ \phi^{-1}:\phi(U\cap V)\rightarrow \psi (U\cap V)$$
is a diffeomorphism in case $U\cap V \neq \varnothing$.
\end{definition}

In Definition~\ref{def:smooth_charts}, note how the smoothness of the transition map can be analyzed in terms of the smoothness of maps between open subsets of $\mathbb{R}^n$, namely the domains of the associated coordinate charts. 

\begin{definition}
    A diffeomorphism between two open subsets $U$ and $V$ of $\mathbb{R}^n$ is a bijective map $F:U\rightarrow V$ such that both $F$ and $F^{-1}$ are continuous and differentiable.
\end{definition}

\subsection{Smooth manifolds}

We are now in a position to define a smooth manifold.

\begin{definition}\label{def:smooth_atlas}
Let $M$ be a topological manifold, 
\begin{itemize}
    \item an atlas $\mathcal{A}$ is a collection of charts whose domains cover $M$
    \item a smooth atlas is an atlas $\mathcal{A}$ such that any two charts in $\mathcal{A}$ are smoothly compatible
    \item a maximal smooth atlas is a smooth atlas that is not properly contained in any larger smooth atlas
    \item $M$ together with a maximal smooth atlas $\mathcal{A}$ is called a smooth manifold denoted by $(M,\mathcal{A})$
\end{itemize}
\end{definition}
 In the following, we define smoothness for a map between two smooth manifolds.

\begin{definition} \label{def:smooth_map}
    Let $M, N$ be smooth manifolds, and let $F: M \rightarrow N$ be any map. We say that $F$ is a smooth map if for every $p \in M$, there exist smooth charts $(U, \varphi)$ containing $p$ and $(V, \psi)$ containing $F(p)$ such that $F(U) \subseteq V$ and the composite map $\psi \circ F \circ$ $\varphi^{-1}$ is smooth from $\varphi(U)$ to $\psi(V)$. This means that $\psi \circ F \circ$ $\varphi^{-1}$ is a map between subsets of $\mathbb{R}^n$ and $\mathbb{R}^n$ and we can apply the usual real calculus.
\end{definition}

Since a manifold does not have the usual operations of the Euclidean vector space, one way to construct a tangent vector space to a manifold $M$ at a point $p \in M$ is to define it in terms of a tangent vector to some curve $\gamma: I\rightarrow M$.

\begin{definition}\label{def:tangent_vector}
    Let $M$ be a smooth manifold, and let $\gamma_1, \gamma_2:(-\epsilon,\epsilon)\rightarrow M$ be  smooth curves in $M$. 
    Suppose that $\gamma_1(0)=\gamma_2(0)=p\in M$, then $\gamma_1$ and $\gamma_2$ are said to be equivalent if the following holds:
    $$\left.\frac{d(\varphi \circ \gamma_1)}{d t}\right|_{t=0}=\left.\frac{d(\varphi \circ \gamma_2)}{d t}\right|_{t=0}$$
    This defines an equivalence relation on the set of all smooth curves through $p$, and the equivalence classes are called tangent vectors of $M$ at $p$. The tangent space $T_pM$ to $M$ at $p$ is then defined as the set of all tangent vectors at $p$ and does not depend on the choice of the coordinate chart $\varphi$.
\end{definition}

A vector space structure on the tangent space $T_pM$ is defined by using the coordinate charts that map between subsets of $\mathbb{R}^n$ and allow us to do vector addition and scalar multiplication. \citet{lee2003introduction} shows that the resulting construction is independent of the choice of the charts and that $T_pM$ is an $n$-dimensional real vector space. 

Considering maps between manifolds, we want to link the tangent space of one with the tangent space of the other. This is done by defining the differential $\mathrm{d}F$ of $F$ at a point $p$, which is a linear mapping from one manifold's tangent space to another's. 

\begin{definition}\label{def:diff_of_F}
    Let $M$ and $N$ be smooth manifolds and $F: M \rightarrow N$ be a smooth map. For each $p \in M$ we define a map:
    $$\mathrm{d}F_p:T_pM \rightarrow T_{F(p)}N$$
    called the differential of $F$ at $p$, which is a linear map between the tangent spaces. The following property defines the differential:
    $$\mathrm{d}F_p(v)=\left.\frac{d(\varphi \circ \gamma)}{d t}\right|_{t=0}$$
    where $\gamma$ is smooth a curve in $M$ through $p$ with $\gamma^{\prime}(0)=v$ and $\varphi$ is a coordinate chart around $p$. This construction is independent of the choice of the chart $\varphi$ as shown in \cite{lee2003introduction}.
\end{definition}

The differential allows us to assess the rank of a map between two manifolds.

\begin{definition}\label{def:constant_rank_map}
    Given two smooth manifolds $M$ and $N$ a map $F:M\rightarrow N$ has constant rank $r$ at $p \in M$ if the linear map $dF_p:T_pM \rightarrow T_{F(p)}N$ has rank $r$. $F$ is called a smooth submersion if its differential is surjective at each point (rank $F=\operatorname{dim} N$). It is called a smooth immersion if its differential is injective at each point (rank $F=\operatorname{dim} M$)
\end{definition}

To define new submanifolds as images of maps, we need the concept of a smooth embedding.

\begin{definition}\label{def:smooth_embedding}
    Let $M$ and $N$ be smooth
manifolds, a smooth embedding of $M$ into $N$ is a smooth
immersion $F:M\rightarrow N$ that is also a topological embedding, i.e., a homeomorphism
onto its image $F(M)\subseteq N$ in the subspace topology.
\end{definition}

 Theorem~\ref{thrm:immersion_is_embedding} tells us when an injective smooth immersion is also a smooth embedding. 

\begin{theorem} \label{thrm:immersion_is_embedding}
    (Proposition 4.22 in \cite{lee2003introduction}) Let $M$ and $N$ be smooth manifolds, and $F:M\rightarrow N$ is an injective smooth immersion. If any of the following
holds, then $F$ is a smooth embedding.
\begin{itemize}
    \item $F$ is an open or closed map.
    \item $F$ is a proper map.
    \item $M$ is compact.
    \item $M$ has empty boundary and $\operatorname{dim}M=\operatorname{dim}N$
\end{itemize}
\end{theorem}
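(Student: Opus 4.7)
The plan is to reduce all four cases to the single observation that an injective continuous map $F\colon M\to N$ is a topological embedding whenever it is either an open or a closed map onto $N$ (equivalently, onto $F(M)$ with the subspace topology). This is because if $F$ is injective and, say, closed, then for any closed $C\subseteq M$ the image $F(C)$ is closed in $N$, hence in $F(M)$, so $(F^{-1})^{-1}(C) = F(C)$ is closed in $F(M)$, which shows that $F^{-1}\colon F(M)\to M$ is continuous. Combined with smoothness and injectivity of the immersion $F$, this upgrades $F$ to a smooth embedding. So the whole task reduces to establishing, case by case, that the hypothesis forces $F$ to be open or closed.

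First I would dispatch the easy case: if $F$ is assumed open or closed, there is nothing to do. Next, for the proper case, I would invoke the standard fact that a continuous proper map between locally compact Hausdorff spaces is closed; since smooth manifolds are locally compact Hausdorff by Definition~\ref{def:top_manifold} together with the local Euclidean property, this immediately reduces to the closed-map case. For compact $M$, I would argue directly: any closed $C\subseteq M$ is compact, so $F(C)$ is compact in the Hausdorff space $N$ and therefore closed, making $F$ a closed map. Finally, for the equidimensional case with empty boundary, I would apply the inverse function theorem: an immersion between manifolds of equal dimension has $\mathrm{d}F_p$ an isomorphism at every $p$, so by Definition~\ref{def:constant_rank_map} and the usual inverse function theorem $F$ is a local diffeomorphism, hence a local homeomorphism, hence an open map. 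The empty-boundary hypothesis is what licenses the use of the inverse function theorem at every point of $M$.

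With $F$ shown open or closed in each case, the preceding paragraph's observation finishes the argument: $F\colon M\to F(M)$ is a continuous bijection with continuous inverse, i.e.\ a homeomorphism onto its image, and being already a smooth immersion, it is a smooth embedding in the sense of Definition~\ref{def:smooth_embedding}.

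The main obstacle is the proper-map case, where one must invoke the non-trivial topological fact that proper continuous maps out of a locally compact Hausdorff space into a locally compact Hausdorff space are closed; verifying this cleanly requires a short argument about preimages of compact neighborhoods, and is the only place where one genuinely leaves elementary manipulations of charts. The other three cases are essentially bookkeeping once the injective-continuous-plus-open-or-closed characterization of topological embeddings is in hand.
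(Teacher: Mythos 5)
The paper does not prove this statement; it is imported verbatim as Proposition~4.22 from \citet{lee2003introduction} and used as a black box in the proof of Lemma~\ref{thrm:f_is_embedding_main}. Your reconstruction is correct and is in fact the standard textbook argument that Lee himself gives: establish that an injective continuous map which is open or closed onto its image is a topological embedding, then reduce each of the four hypotheses to openness or closedness (proper $\Rightarrow$ closed via local compactness and Hausdorffness; compact domain $\Rightarrow$ closed via compact-to-Hausdorff; equal dimension with empty boundary $\Rightarrow$ open via the inverse function theorem). One small imprecision worth flagging: being open (resp.\ closed) as a map into $N$ is \emph{sufficient} for being open (resp.\ closed) onto $F(M)$ in the subspace topology, but not equivalent to it, so the parenthetical ``equivalently'' overstates matters slightly; the implication you actually use goes in the correct direction, so the argument stands.
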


Theorem~\ref{thrm:embedding_to_mnfld} tells us that images of smooth embeddings are submanifolds with smooth properties. 

\begin{theorem}\label{thrm:embedding_to_mnfld}
    (Proposition 5.2 in \cite{lee2003introduction}) Suppose $M$ and $N$ are smooth manifolds and $F:M\rightarrow N$ is a smooth embedding. Let $S = F(M)$. With the subspace topology,
$S$ is a topological manifold, and it has a unique smooth structure, making it into an
embedded submanifold of $N$ with the property that $F$ is a diffeomorphism onto its
image.
\end{theorem}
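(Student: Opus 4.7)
The plan is to transfer the smooth structure of $M$ to the set $S=F(M)\subseteq N$ through $F$ itself, then use the rank theorem together with the topological-embedding hypothesis to exhibit $S$ as an embedded submanifold of $N$, and finally derive uniqueness from the local form that $F$ takes.

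First I would equip $S$ with the subspace topology inherited from $N$ (Definition~\ref{def:subspace_topology}). Since $F$ is a smooth embedding, Definition~\ref{def:smooth_embedding} tells me that the corestriction $\tilde F:M\to S$ is a homeomorphism, so the Hausdorff, second-countable, and locally Euclidean properties (Definition~\ref{def:top_manifold}) pass from $M$ to $S$, giving $S$ the structure of a topological $m$-manifold for $m=\dim M$. Next I would define a smooth atlas on $S$ by pushing forward charts: for each chart $(U,\varphi)$ of $M$ declare $\bigl(\tilde F(U),\,\varphi\circ\tilde F^{-1}\bigr)$ to be a chart of $S$. Transition maps between two such charts coincide with the corresponding transition maps on $M$, hence are smooth (Definition~\ref{def:smooth_charts}), so this collection generates a maximal smooth atlas. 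By construction $\tilde F$ reads as the identity in these coordinates, so $\tilde F:M\to S$ is a diffeomorphism.

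The substantive step is to show that $S$ with this smooth structure is actually an embedded submanifold of $N$, i.e.\@ that every point admits a slice chart in $N$. I would apply the rank theorem to the constant-rank smooth immersion $F$ (Definition~\ref{def:constant_rank_map}): for each $p\in M$ there are smooth charts $(U,\varphi)$ around $p$ and $(V,\psi)$ around $F(p)$ in which $F$ takes the normal form $(x^1,\ldots,x^m)\mapsto(x^1,\ldots,x^m,0,\ldots,0)$. The rank theorem on its own only gives $F(U)\subseteq S\cap V$, and in general $S\cap V$ may contain `extra sheets' of $F(M)$ coming from distant points of $M$ (the classic figure-eight immersion). This is the main obstacle, and it is precisely where the topological-embedding hypothesis earns its keep: because $\tilde F(U)$ is open in $S$ for the subspace topology, there exists an open $V_0\subseteq N$ with $\tilde F(U)=S\cap V_0$, and replacing $V$ by $V\cap V_0$ yields a chart of $N$ in which $F(U)=S\cap V$ is exactly the slice $\{x^{m+1}=\cdots=x^n=0\}$. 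These are genuine slice charts for $S$, so $S$ is an embedded submanifold; pulling them back recovers the atlas from the previous paragraph, showing that the inclusion $\iota:S\hookrightarrow N$ is smooth and that $F=\iota\circ\tilde F$ is a smooth embedding onto $S$.

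Finally, for uniqueness, suppose $\mathcal{A}'$ is any other smooth structure on $S$ that makes $\iota$ an embedded submanifold of $N$. Around each point the defining slice charts of $\mathcal{A}'$ differ from those constructed above only by composition with a diffeomorphism of open subsets of $\mathbb{R}^m$, so the identity map $(S,\mathcal{A})\to(S,\mathcal{A}')$ is a local diffeomorphism and hence a global diffeomorphism, forcing $\mathcal{A}=\mathcal{A}'$. The hardest part of the whole argument remains passing from the normal form supplied by the rank theorem to a bona fide slice chart, as it is the only place where the distinction between smooth immersions and smooth embeddings is material.
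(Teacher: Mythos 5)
The paper gives no proof here: it states this result as background and cites it verbatim to Proposition~5.2 of \citet{lee2003introduction}, so there is no in-paper argument to compare against. Your reconstruction is correct and is essentially the argument that the cited source itself gives: push the atlas forward along the corestriction $\tilde F:M\to S$, invoke the rank theorem for the local normal form of the immersion, and then, exactly as you highlight, use openness of $\tilde F(U)$ in the subspace topology on $S$ to cut the rank-theorem codomain chart down to a genuine slice chart for $S$ in $N$. You also correctly locate where the topological-embedding hypothesis is consumed (the ``extra sheets'' obstruction that distinguishes embeddings from mere injective immersions), and your uniqueness argument via compatibility of slice charts is the standard one; so there is no divergence to report.
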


\subsection{Riemannian manifolds}
Proposition 13.3 of \cite{lee2003introduction} proves the existence of a Riemannian metric $g$ in any smooth manifold $N$, where $g$ is a smooth, symmetric covariant 2-tensor field on $M$ that is positive definite at each point $p \in N$ and defines an inner product in a tangent space $T_pN$. The tuple $(N,g)$ is called a Riemannian manifold.

Often, in modeling situations, we think of Theorem~\ref{thrm:embedding_to_mnfld}, that is, we imagine an embedded submanifold $S\subseteq N$ in some ambient space $N$. In this case, we say that $F:M\rightarrow N$ is a parametrization of $S$. If there is a Riemannian metric $g^{N}$ on $N$, there is a way to measure lengths of vectors in a tangent space to a point on $S$ using $g^{N}$ as it is embedded in this larger vector space that has a metric. We can use this to construct a metric on $S$ by pulling  $g^{N}$ by $F$.

    \begin{definition}
    \label{def:appendix:pullback_metric}
    For a map $F: M \rightarrow F(M)=S\subset N$ between manifolds $M$ and $S$, and a metric $g^{N}$ on $N$, the pullback metric $f^*g^{N}$ on $M$ is defined as:
\begin{equation}
\label{eq:pullback:appendix}
    (F^*g^{N})_p(u, v) = g^{N}_{F(p)}(\mathrm{d}F_p(u), \mathrm{d}F_p(v))
\end{equation}
for any tangent vectors $u, v \in T_pM$.
\end{definition}

Given two Riemannian manifolds, we can check if they are isometric by using the pullback metric.

\begin{definition}
   Given Riemannian manifolds $(M,g^M)$ and $(N,g^N)$, a smooth map $F:M\rightarrow N$ is called an isometry if $F$ is a diffeomorphism such that:
   \begin{equation}
       \label{eq:isometry_def}
       F^*g^N=g^M 
   \end{equation}
   In which case we say $(M,g^M)$ and $(N,g^N)$ are isometric
\end{definition}

A series of results in \cite{lee2003introduction} Chapters 13,15,16 show that if two manifolds are isometric through a diffeomorphism $F$, then $F$ preserves lengths of curves, distances, angles, volumes, and other geometric properties between manifolds. 

 \begin{definition}\label{def:metric_enables}
 Given a Riemannian manifold $(M,g)$ we can define the following:
     \begin{itemize}
    \item length or norm of a tangent vector $v \in T_p M$ is defined to be
$$
|v|_g=\langle v, v\rangle_g^{1 / 2}=g_p(v, v)^{1 / 2} .
$$
\item angle between two nonzero tangent vectors $v, w \in T_p M$ is the unique $\theta \in$ $[0, \pi]$ satisfying
$$
\cos \theta=\frac{\langle v, w\rangle_g}{|v|_g|w|_g} .
$$
\item tangent vectors $v, w \in T_p M$ are said to be orthogonal if $\langle v, w\rangle_g=0$. This means either one or both vectors are zero, or the angle between them is $\pi / 2$.
\item given a smooth curve $\gamma:[a,b]\rightarrow M$ we can define the length of $\gamma$ to be:
$$\mathrm{L}_g(\gamma)=\int_a^b\left|\gamma^{\prime}(t)\right|_g d t$$
and  the energy of $\gamma$ to be:
$$\mathrm{E}_g(\gamma)=\frac12\int_a^b\left|\gamma^{\prime}(t)\right|_g^2 d t$$
\end{itemize}
 \end{definition}

Proposition 13.25 of \citet{lee2003introduction} shows that given a curve $\gamma:[a,b]\rightarrow M$ and a reparametrization $u:[c,d]\rightarrow [a,b]$ that is a diffeomorphism, the length of the curve $\tilde{\gamma} = \gamma \circ u$, $\mathrm{L}_g(\tilde{\gamma})$ is equal to $\mathrm{L}_g(\gamma)$.

The notion of the length of a curve given in Definition~\ref{def:metric_enables} allows us to consider the Riemannian distance from $p$ to $q$ ($p,q \in M$) denoted by $d_g(p,q)$ and defined to be the infimum of $\mathrm{L}_g$ over all piecewise smooth curve segments from $p$ to $q$. A shortest curve is not unique since $\mathrm{L}_g(\gamma)$ is reparametrization invariant, and a set of curves is locally minimizing  $\mathrm{L}_g(\gamma)$. One useful parametrization is by arc-length $s$, which ensures that we move along the curve at a constant speed.

\begin{definition}[Arc-length parametrization]
    A curve $\gamma:[a,b]\rightarrow M$ is said to be parametrized by arc-length if the length of the curve between any two points $t_1$ and $t_2$ is equal to the difference in the parameter values $t_2-t_1$. Formally, $\gamma$ is parametrized by arc-length if:
    $$\left|\gamma^{\prime}(t)\right|_g=1$$
    for all $t\in [a,b]$.
\end{definition}

Curves $\gamma$ that are locally minimizing $\mathrm{L}_g(\gamma)$ and are parametrized by arc-length are called geodesics defined in Definition~\ref{def:geodesics}.

\begin{definition}\label{def:geodesics}
Given a Riemannian manifold $(M,g)$ and $x,y \in M$ with $x\neq y$, a geodesic curve on between $x$ and $y$ on $M$ is formally defined as a curve 
$\gamma:I\rightarrow M$ that locally minimizes the energy functional 
$$\mathrm{E}_g(\gamma)=\frac{1}{2}\int_a^b\left|\gamma^{\prime}(t)\right|_g^2 d t$$
over all smooth curves 
$\gamma:[a,b]\rightarrow M$ connecting two given points 

$\gamma(a)=x$ and $\gamma(b)=y$, where 
$g$ is the Riemannian metric tensor on $M$.
\end{definition}

As discussed by, e.g., \citet{DG_SH} it is a standard result that a minimizer of the energy functional will necessarily be arc-length parametrized and minimize the length functional.





We can consider the Fr{\'e}chet mean and variance of a set of points on a manifold.

\begin{definition}\label{def:frechet_variance}
    Let $(M,d_g)$ be a Riemannian metric space and let $\left\{ x_1 \dots x_N \right\} \in M$ be points on the manifold. For any point $p \in M$, Fr{\'e}chet variance is defined to be:
    $$ \Psi (p)=\sum _{i=1}^{N}d_g^{2}\left(p,x_{i}\right)$$
    Karcher means are the points $m\in M$ that locally minimize $\Psi$:
     $$m=\mathop {\text{arg min}} _{p\in M}\sum _{i=1}^{N}d_g^{2}\left(p,x_{i}\right)$$
    If there exists a unique $m \in M$ that globally minimizes $\Psi$, then it is a Fr{\'e}chet mean.
\end{definition} 

\newpage

\section{Proofs}
\label{appendix:proofs}

\begin{theorem}(Lemma~\ref{thrm:f_is_embedding_main} in the main text)\label{thrm:f_is_embedding}

       Let $\mathcal{Z}$ and $\mathcal{D}$ be two smooth manifolds and $f \in \mathcal{F}$, then $f$ is a smooth embedding and $f(\mathcal{Z})\subset \mathcal{D}$ is a submanifold in $\mathcal{D}$. In particular, $f:\mathcal{Z}\rightarrow f(\mathcal{Z})$ is a diffeomorphism.
\end{theorem}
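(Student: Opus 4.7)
The plan is to assemble the conclusion from three standard pieces in differential geometry, each of which is activated by a different subset of the assumptions \ref{ass:1}--\ref{ass:4}. Concretely I want to show (i) $f$ is a smooth immersion, (ii) $f$ is a smooth embedding, and (iii) its image is a submanifold with respect to which $f$ becomes a diffeomorphism. Steps (ii) and (iii) are essentially appeals to Theorem~\ref{thrm:immersion_is_embedding} and Theorem~\ref{thrm:embedding_to_mnfld}, so the bulk of the work is wiring the hypotheses correctly.

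First I would verify that $f$ is a smooth immersion in the sense of Definition~\ref{def:constant_rank_map}. Smoothness of $f$ is assumed in the statement of the problem, so the remaining content is that $\mathrm{d}f_p : T_p\mathcal{Z} \to T_{f(p)}\mathcal{D}$ is injective at every $p\in\mathcal{Z}$. This is exactly the condition that $\mathrm{d}f_p$ has full column rank, which is \ref{ass:3}. Hence $\mathrm{rank}\, f = \dim \mathcal{Z}$ everywhere, so $f$ is a smooth immersion (and in particular a map of constant rank).

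Next I would upgrade this to a smooth embedding. For a smooth embedding I need $f$ to be an injective smooth immersion that is also a topological embedding, i.e.\ a homeomorphism onto its image in the subspace topology. Injectivity is \ref{ass:2}. To get the topological embedding part for free I would invoke Theorem~\ref{thrm:immersion_is_embedding}: any of the listed sufficient conditions turns an injective smooth immersion into a smooth embedding, and the compactness of $\mathcal{Z}$ granted by \ref{ass:1} is exactly the third bullet there. So \ref{ass:1}--\ref{ass:3} together give that $f$ is a smooth embedding.

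Finally I would conclude by applying Theorem~\ref{thrm:embedding_to_mnfld}: for a smooth embedding $f:\mathcal{Z}\to \mathcal{D}$, the image $f(\mathcal{Z})$ carries a unique smooth structure making it into an embedded submanifold of $\mathcal{D}$, and $f$ is a diffeomorphism onto its image. This gives both the submanifold claim (with $f(\mathcal{Z})=\mathcal{M}$ by \ref{ass:4}, though \ref{ass:4} is not strictly needed for the lemma as stated) and the diffeomorphism claim. I do not anticipate any real obstacle beyond lining up the hypotheses with the right theorems; the only subtle point is that \ref{ass:3} alone gives an immersion but not an embedding, so compactness \ref{ass:1} must be explicitly invoked to exclude pathologies like injective immersions that fail to be homeomorphisms onto their image.
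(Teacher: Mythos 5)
Your proof is correct and follows essentially the same three-step route as the paper: establish that $f$ is a smooth immersion via \ref{ass:3}, upgrade to a smooth embedding via \ref{ass:2} and the compactness criterion in Theorem~\ref{thrm:immersion_is_embedding} (using \ref{ass:1}), and then apply Theorem~\ref{thrm:embedding_to_mnfld} for the submanifold and diffeomorphism claims. If anything you are slightly more careful than the paper about which assumption activates which step (the paper's phrasing attributes the immersion property to \ref{ass:2}--\ref{ass:3} jointly, whereas you correctly note \ref{ass:3} alone gives the immersion and \ref{ass:2} is only needed for injectivity of $f$), and your aside that \ref{ass:4} is not used in this lemma is also accurate.
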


\begin{proof}
    By Definition~\ref{def:constant_rank_map} $f$ is a smooth immersion as it is a smooth map of constant rank with injective differential (assumptions~\ref{ass:2}-\ref{ass:3}). Using Theorem~\ref{thrm:immersion_is_embedding} with the fact that $\mathcal{Z}$ is a compact set (assumption~\ref{ass:1}) gives us that $f$ is a smooth embedding. Finally, Theorem~\ref{thrm:embedding_to_mnfld} gives us that $f(\mathcal{Z})$ is a submanifold of $\mathcal{D}$ and $f$ is a diffeomorphism on its image. 
\end{proof}

\begin{theorem}(Lemma~\ref{thrm:gen_transf_is_diffeo_main} in the main text) 
\label{thrm:gen_transf_is_diffeo}
    Let $f_a,f_b \in \mathcal{F}$ and consider the generator transform $A_{a, b}:\mathcal{Z}_a\rightarrow \mathcal{Z}_b$ defined by $$A_{a, b}(z) = f_b^{-1}\circ f_a(z)$$ Then $A_{a, b}(z)$ and $A^{-1}_{a, b}(z)=A_{b, a}(z)=f_a^{-1}\circ f_b(z)$ are diffeomorphisms.
\end{theorem}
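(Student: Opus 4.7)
The plan is to reduce the statement to the elementary fact that a composition of diffeomorphisms between smooth manifolds is again a diffeomorphism, using Lemma~\ref{thrm:f_is_embedding_main} (proved just above as Theorem~\ref{thrm:f_is_embedding}) as the key structural input. First, I would apply Lemma~\ref{thrm:f_is_embedding_main} to both $f_a$ and $f_b$: since each lies in $\mathcal{F}$ and satisfies assumptions~\ref{ass:1}--\ref{ass:3}, each is a smooth embedding and, in particular, $f_a:\mathcal{Z}_a\to f_a(\mathcal{Z}_a)$ and $f_b:\mathcal{Z}_b\to f_b(\mathcal{Z}_b)$ are diffeomorphisms onto their images.

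Next, I would invoke assumption~\ref{ass:4} to conclude $f_a(\mathcal{Z}_a)=f_b(\mathcal{Z}_b)=\mathcal{M}$. This is the decisive step: it is what makes the map $f_b^{-1}\circ f_a$ even well-defined on all of $\mathcal{Z}_a$, because every point $f_a(z)\in\mathcal{M}$ now has a preimage under $f_b$, and that preimage is unique by injectivity (\ref{ass:2}). Since $f_b:\mathcal{Z}_b\to\mathcal{M}$ is a diffeomorphism by the previous step, its inverse $f_b^{-1}:\mathcal{M}\to\mathcal{Z}_b$ is likewise a diffeomorphism.

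Finally, $A_{a,b}=f_b^{-1}\circ f_a$ is a composition of the diffeomorphisms $f_a:\mathcal{Z}_a\to\mathcal{M}$ and $f_b^{-1}:\mathcal{M}\to\mathcal{Z}_b$, hence itself a diffeomorphism $\mathcal{Z}_a\to\mathcal{Z}_b$. The symmetric argument applied to $A_{b,a}=f_a^{-1}\circ f_b$ shows that it too is a diffeomorphism, and a one-line computation $A_{b,a}\circ A_{a,b}=f_a^{-1}\circ f_b\circ f_b^{-1}\circ f_a=\mathrm{id}_{\mathcal{Z}_a}$ (and symmetrically $A_{a,b}\circ A_{b,a}=\mathrm{id}_{\mathcal{Z}_b}$) confirms $A_{b,a}=A_{a,b}^{-1}$ as claimed.

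The only conceptually nontrivial ingredient here is Lemma~\ref{thrm:f_is_embedding_main}, whose weight has already been absorbed. The rest is bookkeeping, with the main obstacle — or rather the only subtle point — being to notice that without assumption~\ref{ass:4} the construction $f_b^{-1}\circ f_a$ might fail to be defined on all of $\mathcal{Z}_a$, so \ref{ass:4} is load-bearing rather than cosmetic. Everything else is a direct appeal to standard composition properties of diffeomorphisms.
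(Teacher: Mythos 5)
Your proof is correct and takes exactly the same route as the paper: invoke Lemma~\ref{thrm:f_is_embedding_main} to make $f_a$ and $f_b$ diffeomorphisms onto their images, use assumption~\ref{ass:4} to identify those images, and compose. You have simply unpacked the one-line argument the paper gives, including the (accurate) observation that~\ref{ass:4} is what makes $f_b^{-1}\circ f_a$ globally well-defined.
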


\begin{proof}
    The result follows from Theorem~\ref{thrm:f_is_embedding} with the fact that $f_a,f_b \in \mathcal{F}$ have the same image due to assumption~\ref{ass:4}.
\end{proof}

\begin{theorem}(Theorem~\ref{thrm:gentransforms_isometries} in the main text)

   Let $\theta_a=\left(f_a, P_{Z_a}\right)$ and $\theta_b=\left(f_b, P_{Z_b}\right)$ with $P_{\theta_a}=P_{\theta_b}$and let $(\mathcal{Z}_a, g^{f_a})$ and $(\mathcal{Z}_b,g^{f_b})$ be the associated Riemannian manifolds, then the generator transform is an isometry and it holds that:
    \begin{equation}
    \label{eq:isometry:appendix}
        \left(A_{a, b}\right)^* g^{f_b}=g^{f_a}
    \end{equation}
    Thus, making $(\mathcal{Z}_a,g^{f_a})$ and $(\mathcal{Z}_b,g^{f_b})$ isometric. This makes Riemannian geometric properties such as lengths of curves, angles, volumes, areas, Ricci curvature tensor, geodesics, parallel transport, and the exponential map identifiable. 
\end{theorem}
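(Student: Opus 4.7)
The plan is to split the argument into two stages: first establish the isometry identity in Eq.~\ref{eq:isometry:appendix}, and then read off each geometric invariant from it using standard results in Riemannian geometry combined with the task framework of Definition~\ref{def:task_identifiability}.

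For the isometry identity, I would exploit the defining relation $f_a = f_b \circ A_{a,b}$, which follows directly from Definition~\ref{def:generator_transforms}, together with the contravariant functoriality of the pullback, $(G \circ H)^* = H^* \circ G^*$. Applying this to $f_a = f_b \circ A_{a,b}$, and using $g^{f_a} = f_a^* g^{\mathcal{D}}$ and $g^{f_b} = f_b^* g^{\mathcal{D}}$, I would compute
\begin{equation*}
g^{f_a} = f_a^* g^{\mathcal{D}} = (f_b \circ A_{a,b})^* g^{\mathcal{D}} = A_{a,b}^* \bigl(f_b^* g^{\mathcal{D}}\bigr) = A_{a,b}^* g^{f_b},
\end{equation*}
which is exactly Eq.~\ref{eq:isometry:appendix}. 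Combined with Lemma~\ref{thrm:gen_transf_is_diffeo_main}, which guarantees that $A_{a,b}$ is a diffeomorphism, this means $A_{a,b}$ is an isometry of Riemannian manifolds, so $(\mathcal{Z}_a, g^{f_a})$ and $(\mathcal{Z}_b, g^{f_b})$ are isometric. The pointwise verification reduces to checking that for any $\mathbf{z} \in \mathcal{Z}_a$ and $u, v \in T_{\mathbf{z}} \mathcal{Z}_a$, the differentials satisfy $\mathrm{d}f_{a,\mathbf{z}} = \mathrm{d}f_{b, A_{a,b}(\mathbf{z})} \circ \mathrm{d}A_{a,b,\mathbf{z}}$ by the chain rule, which then yields equality of the two inner products.

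Once the isometry is established, each geometric invariant in the statement (lengths, angles, volumes, Ricci tensor, geodesics, parallel transport, exponential map) is preserved by isometries; these are textbook facts cited via \citet{lee2003introduction}, and I would simply list the relevant propositions rather than re-derive them. For the identifiability claim proper, I would then recast each invariant as a task $(s, t)$ in the sense of Definition~\ref{def:task_identifiability}. Concretely, the selection function $s$ picks latent points from observations via the inverse decoder, i.e.\@ $s(\theta_a, \mathbf{x}_m) = f_a^{-1}(\mathbf{x}_m)$, and equivariance $s(A \theta_a, \mathbf{x}_m) = A(s(\theta_a, \mathbf{x}_m))$ follows from $f_b^{-1} = A_{a,b} \circ f_a^{-1}$. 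The task $t$ outputs the chosen geometric quantity computed with respect to $g^{f_a}$; invariance under $A_{a,b}$ is precisely the isometric invariance property just invoked.

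The main obstacle I anticipate is not any single hard step but rather the bookkeeping: each invariant requires a slightly different formulation of $t$ (e.g.\@ a geodesic is a curve, the exponential map is a function on the tangent bundle, curvature is a tensor field), and one must verify that in each case the equivariance condition of Definition~\ref{def:task_identifiability} holds under pushforward by $A_{a,b}$. For tensorial objects like the Ricci curvature, this requires being careful that pullback/pushforward commutes with the relevant geometric construction, which is guaranteed because the constructions are natural with respect to isometries. I would therefore treat the lengths/angles/volumes case in detail as a template, and handle the remaining invariants by invoking naturality of the corresponding construction under isometric diffeomorphisms.
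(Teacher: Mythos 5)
Your proposal is correct and follows essentially the same route as the paper's proof: establish $A_{a,b}^* g^{f_b} = g^{f_a}$ from $f_a = f_b \circ A_{a,b}$, then invoke standard isometric invariance results and recast each invariant as a task in the sense of Definition~\ref{def:task_identifiability}. The one cosmetic difference is that you invoke the contravariant functoriality of pullback, $(G \circ H)^* = H^* \circ G^*$, as a black box, whereas the paper unwinds the same fact pointwise via the chain rule applied to $\mathrm{d}(f_b \circ f_b^{-1} \circ f_a)_p = \mathrm{d}f_{a,p}$; these are the same computation at different levels of abstraction, and your version is arguably cleaner. For the final identifiability step the paper does not do the per-invariant bookkeeping you outline; it proves only the geodesic-distance case in detail (as Theorem~\ref{thrm:identifiable_dist_func}) and treats the remaining invariants by citing O'Neill's Chapters 6--7 for isometric invariance, leaving the task-equivariance check as a straightforward analogue. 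Your plan to verify each case carefully is more thorough but not required for the paper's level of rigor.
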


\begin{proof}
    We first show that the generator transform is an isometry. Let us recall that by definition of the pullback we have:
    $$g^{f_b}_p(u, v) = g^{\mathcal{D}}_{f_b(p)}(\mathrm{d}f_{b,p}(u), \mathrm{d}f_{b,p}(v))$$
    for $u,v \in T_p\mathcal{Z}_b$ and where $\mathrm{d}f_{b,p}(v)$ denotes the differential of the map $f_b$ at a point $p \in \mathcal{Z}_b$ evaluated on the vector $v \in T_p\mathcal{Z}_b$.
    Using this, we will check Eq.~\ref{eq:isometry:appendix} directly:
    \begin{equation}
        \label{eq:pullback_of_pullback}
        \begin{aligned}
            \left((A_{a,b})^* g^{f_b}\right)_p(y,w)&=g^{\mathcal{D}}_{f_b\circ f_b^{-1}\circ f_a(p)} \left(  \mathrm{d} f_{b,f_b^{-1}\circ f_a(p)}(y),\mathrm{d}f_{b,f_b^{-1}\circ f_a(p)}(w)\right)\\
            &= g^{\mathcal{D}}_{ f_a(p)}\left( 
            \mathrm{d}(f_b\circ f_b^{-1}\circ f_a)_p(y),
            \mathrm{d}(f_b\circ f_b^{-1}\circ f_a)_p(w)\right)\\
            &= g^{\mathcal{D}}_{ f_a(p)}\left( 
            \mathrm{d}f_{a,p}(y),
           \mathrm{d}f_{a,p}(w)\right)\\
           &= g^{f_a}
        \end{aligned}
    \end{equation}
    for $y,w \in T_p\mathcal{Z}_a$.

    Since we have shown that the generator transform is an isometry, we can conclude that $(\mathcal{Z}_a,g^{f_a})$ and $(\mathcal{Z}_b,g^{f_b})$ are isometric and thus their Riemannian metric properties are identical \cite{oneilgeom}[Chapters 6 and 7]. To connect to identifibility, we express any of the properties as a task of the form described in Section~\ref{sec:background} and use the isometry result above to conclude that the output will be same. We will show an example of this and prove the claim for the goedesic distance function (Theorem~\ref{thrm:identifiable_dist_func} in the main text) below.
\end{proof}

\begin{theorem}
(Theorem~\ref{thrm:identifiable_dist_func} in the main text)

    Let $\theta_a=\left(f_a, P_{Z_a}\right)$ and $\theta_b=\left(f_b, P_{Z_b}\right)$ with $P_{\theta_a}=P_{\theta_b}$and let $A_{a, b}$ be the generator transform between the parameters. Furthermore, let $(\mathcal{Z}_a, g^{f_a})$ and $(\mathcal{Z}_b,g^{f_b})$ be the associated connected Riemannian manifolds. Then, the geodesic distance between $\mathbf{z}_1$ and $\mathbf{z}_2$ is identifiable and it holds that: 
    \begin{equation}
        \label{eq:appendix:geodesic_dist}
        d_{g^{f_a}}(\mathbf{z}_1,\mathbf{z}_2)= d_{g^{f_b}}(A_{a, b}(\mathbf{z}_1),A_{a, b}(\mathbf{z}_2))
    \end{equation}
for some $\mathbf{z}_1,\mathbf{z}_2 \in \mathcal{Z}_a$ be two points in the latent space that correspond to some $\mathbf{x}_1,\mathbf{x}_2 \in \mathcal{M}$ on the manifold.
\end{theorem}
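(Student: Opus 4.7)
The plan is to formulate the computation of the geodesic distance as a task in the sense of Definition~\ref{def:task_identifiability} and then use the isometry property proven in Theorem~\ref{thrm:gentransforms_isometries} to verify the two conditions of the definition. The selection function will be $s(\theta,\mathbf{x}_1,\mathbf{x}_2)=(f^{-1}(\mathbf{x}_1),f^{-1}(\mathbf{x}_2))$, i.e.\@ the decoding of the two observations into their latent preimages, and the task evaluation will be $t(\theta,\mathbf{x}_1,\mathbf{x}_2,\mathbf{z}_1,\mathbf{z}_2)=d_{g^{f}}(\mathbf{z}_1,\mathbf{z}_2)$ as defined in Eq.~\ref{eq:geodesic_dist_def}.

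For the equivariance condition on the selection function, I would unfold the definition $A\theta_a=(f_a\circ A_{a,b}^{-1},(A_{a,b})_{\#}P_{Z_a})$ and observe that $(f_a\circ A_{a,b}^{-1})^{-1}=A_{a,b}\circ f_a^{-1}$, so applying $f_b^{-1}$ to $\mathbf{x}_i$ yields $A_{a,b}(f_a^{-1}(\mathbf{x}_i))$, which is exactly $A_{a,b}(s(\theta_a,\mathbf{x}_1,\mathbf{x}_2))$ componentwise. This gives the required $s(A\theta,\mathbf{x})=A(s(\theta,\mathbf{x}))$. Note that Lemma~\ref{thrm:gen_transf_is_diffeo_main} guarantees that $A_{a,b}$ is a well-defined diffeomorphism, so these preimages and compositions are licit.

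For the invariance condition on the task, I would argue that any piecewise smooth curve $\gamma:[0,T]\to\mathcal{Z}_a$ from $\mathbf{z}_1$ to $\mathbf{z}_2$ corresponds via $\tilde\gamma:=A_{a,b}\circ\gamma$ to a piecewise smooth curve from $A_{a,b}(\mathbf{z}_1)$ to $A_{a,b}(\mathbf{z}_2)$ in $\mathcal{Z}_b$, and vice versa via $A_{b,a}$, giving a bijection between admissible curves in the two infima. The lengths match: using the pullback identity $(A_{a,b})^* g^{f_b}=g^{f_a}$ from Theorem~\ref{thrm:gentransforms_isometries} together with the chain rule $\tilde\gamma'(t)=\mathrm{d}A_{a,b,\gamma(t)}(\gamma'(t))$, we get
\begin{equation}
|\tilde\gamma'(t)|_{g^{f_b}}^2 = g^{f_b}_{A_{a,b}(\gamma(t))}\bigl(\mathrm{d}A_{a,b}(\gamma'(t)),\mathrm{d}A_{a,b}(\gamma'(t))\bigr) = \bigl((A_{a,b})^*g^{f_b}\bigr)_{\gamma(t)}(\gamma'(t),\gamma'(t)) = |\gamma'(t)|_{g^{f_a}}^2 .
\end{equation}
Integrating yields $\mathrm{L}_{g^{f_b}}(\tilde\gamma)=\mathrm{L}_{g^{f_a}}(\gamma)$, and taking the infimum over the two (bijectively identified) families of curves gives Eq.~\ref{eq:appendix:geodesic_dist}. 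Both conditions of Definition~\ref{def:task_identifiability} are then satisfied and the geodesic distance task is identifiable.

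The main obstacle, though it is more bookkeeping than difficulty, is being careful that the bijection between the two families of curves truly lets us exchange the infima; this requires that $A_{a,b}$ is a diffeomorphism (so that endpoints are preserved and regularity is preserved in both directions), which is exactly what Lemma~\ref{thrm:gen_transf_is_diffeo_main} delivers. The length-preservation computation itself is a direct unfolding of the pullback definition once Theorem~\ref{thrm:gentransforms_isometries} is in hand, so the real content of the proof is entirely carried by that earlier theorem; the present result is essentially its specialization to the length functional.
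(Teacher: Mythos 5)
Your proof is correct and follows the same overall strategy as the paper: cast geodesic-distance computation as an identifiable task with selection $s(\theta,\mathbf{x})=f^{-1}(\mathbf{x})$ and evaluation $t=d_{g^f}$, verify the equivariance of $s$ by unfolding $A_{a,b}=f_b^{-1}\circ f_a$, and verify the invariance of $t$ via the isometry $\left(A_{a,b}\right)^*g^{f_b}=g^{f_a}$ from Theorem~\ref{thrm:gentransforms_isometries}. The only place you diverge is in the task-invariance step. The paper picks a curve $\gamma_a$ achieving the infimum, declares it a geodesic, and invokes the fact that isometries map geodesics to geodesics to conclude $A_{a,b}\circ\gamma_a$ is a geodesic in $\mathcal{Z}_b$ realizing the distance there. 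You instead show directly, via the pullback identity and the chain rule, that $A_{a,b}$ preserves the length of \emph{every} piecewise-smooth curve and gives a bijection between the two admissible families, so the two infima coincide. Your version is more elementary and arguably tighter: it never needs the infimum to be attained, and it avoids the small logical step of passing from ``maps geodesics to geodesics'' to ``preserves the value of the infimum,'' which the paper leaves implicit. Both arguments rest entirely on Theorem~\ref{thrm:gentransforms_isometries} and Lemma~\ref{thrm:gen_transf_is_diffeo_main}, so the real content is the same; yours simply packages the final step in a way that requires less of the manifold.
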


\begin{proof}We need to check Definition~\ref{def:task_identifiability} to show that the task of measuring distances in the latent space is identifiable using the geodesic distance function.

Let $\mathbf{x}_1,\mathbf{x}_2\in \mathcal{D}$ be the observed data points and consider the inverse of the decoders as our selection function given $\mathbf{x}_1$:
\begin{equation}
    \mathbf{z}_1^a=s(\theta_a, \mathbf{x}_1)=f_a^{-1}(\mathbf{x}_1) \text{ and } s(A\theta, \mathbf{x}_1)=s(\theta_b, \mathbf{x}_1)=f_b^{-1}(\mathbf{x}_1)=\mathbf{z}_1^b
\end{equation}
in a similar way we obtain $\mathbf{z}_2^a$ and $\mathbf{z}_2^b$. Define the task of measuring distances on the manifold as: 
\begin{equation}
\begin{aligned}
    t(\theta_a, \left\{ \mathbf{x}_1,\mathbf{x}_2 \right\} ,\left\{ \mathbf{z}_1^a,\mathbf{z}_2^a \right\}) &=d_{g^{f_a}}(\mathbf{z}_1^a,\mathbf{z}_2^a)\\
    t(\theta_b, \left\{ \mathbf{x}_1,\mathbf{x}_2 \right\} ,\left\{ \mathbf{z}_1^b,\mathbf{z}_2^b \right\}) &=d_{g^{f_b}}(\mathbf{z}_1^b,\mathbf{z}_2^b)
\end{aligned}
\end{equation}

To check the selection function, take  with $A\in \mathbf{A}(M)$ and some $\mathbf{x}_1\in \mathcal{D}$. Then,
\begin{equation}
    \begin{aligned}
        \mathbf{z}_1^b=s(\theta_b, \mathbf{x}_1) &= f_b^{-1}(\mathbf{x}_1)\\ &= f_b^{-1}\circ f_a \circ f_a^{-1}(\mathbf{x}_1)\\ &= A_{a,b}(s(\theta_a, \mathbf{x}_1))\\
        &= A(s(\theta_a, \mathbf{x}_1))\\
        &=A_{a,b}(f_a^{-1}(\mathbf{x}_1))\\&=A_{a,b}(\mathbf{z}_1^a)
    \end{aligned}
\end{equation}
where we have used that $A$ is almost everywhere equal to the generator transform $A_{a,b}$ due to \citeauthor{xi2023indeterminacy}.

To check the task function, let us recall that:
\begin{equation}
\label{eq:dist:ap}
d_{g^{f_a}}(\mathbf{z}_1^a,\mathbf{z}_2^a)= \inf_{\gamma} \int_a^b | \gamma^{\prime}(t) |_{g^{f_a}} dt    
\end{equation}
where $\gamma:[c,d]\rightarrow\mathcal{Z}_a$ is a curve in $\mathcal{Z}_a$ connecting $\mathbf{z}_1^a$ and $\mathbf{z}_2^a$ such that $\gamma(c)=\mathbf{z}_1^a$ and $\gamma(d)=\mathbf{z}_2^a$.

As a geodesic is not unique, multiple curves result in the infimum in Eq.~\ref{eq:dist:ap}. Let $\gamma_a$ be one solution, then it is by Definition~\ref{def:geodesics} a geodesic curve. By Theorem~\ref{thrm:gentransforms_isometries} we have that $(\mathcal{Z}_a, g^{f_a})$ and $(\mathcal{Z}_b,g^{f_b})$ are isometric, which means that $A_{a, b}$ maps geodesics to geodesics \cite{oneilgeom}, then $\gamma_b:[\tilde{c},\tilde{d}]\rightarrow \mathcal{Z}_b$ constructed from $\gamma_a$ by $A_{a, b}(\gamma_a)$ is a geodesic in $\mathcal{Z}_b$ and thus a solution for $d_{g^{f_b}}(A_{a, b}(\mathbf{z}_1^a),A_{a, b}(\mathbf{z}_2^a))=d_{g^{f_b}}(\mathbf{z}_1^b,\mathbf{z}_2^b)$.
\end{proof}

\section{Computing geodesics}
\label{appendix:geodesics}

A geodesic between $a$ and $b$ is defined to be a curve $\gamma(t)$ defined on some interval (usually $[0,1]$) such that $\gamma(0) = a$ and $\gamma(1) = b$ minimizing the length functional defined in Definition~\ref{def:metric_enables}. In our work, we choose to parametrize a geodesic by a cubic spline \cite{Schoenberg1946ContributionsTT} and optimize the energy functional defined in Definition~\ref{def:metric_enables} with respect to the free parameters using gradient methods. 

\subsection{Gedoesic parametrized by a cubic spline}

Having settled on a cubic spline as a parametrization of a geodesic, we will now describe the construction of the spline and use it to derive the free parameters of the resulting curve that we can use when minimizing the energy of that curve. 

A cubic spline is a piecewise function with pieces that are cubic polynomials. The points where the pieces meet are called the knots, $h$, and we want to construct a continuous spline with continuous first and second derivatives. Individual components are polynomials, so we only need to constrain their behavior at the knots to satisfy the requirements. Suppose the knots are known, and we are using the splines to interpolate a set of points. In that case, these constraints and boundary constraints will usually give a system of linear equations that can be solved to find the coefficients of the polynomials. In our setting, however, we are using splines to define a path between two points, and the knots are the unknown parameters of the problem, as well as the coefficients of the polynomials. Furthermore, given that $a,b \in \mathbb{R}^n$, we will look to parametrize a geodesic curve $\gamma(t) = (\gamma_1(t), \dots, \gamma_n(t)) \in \mathbb{R}^n$ and thus we will have $n$ splines, one for each dimension. In the following, we will describe how such a construction works for one dimension and invite the reader to conceptually repeat this for $n$ dimensions.

Following the idea of \cite{DG_SH}, we will start by connecting two points in the latent space ($a, b \in \mathbb{R}$) by a straight line $l: [0,1] \rightarrow \mathbb{R}$ defined as $l(t) = a + t(b-a)$ and then find a cubic spline that will start and end in 0 to parametrize a deviation from the line. The result will be a curve $\gamma(t) = l(t) + S(t)$ that will connect the two points on the manifold. 

The spline $S(t)$ is defined as a piecewise function with $n$ cubic polynomials with coefficients $a_i, b_i, c_i, d_i \in \mathbb{R}$, each defined on an interval $[h_i, h_{i+1}]$ where $h_i$ are the knots with $h_0 = 0$ and $h_n=1$ set to be the endpoints. 

\begin{equation}
    S(t) = \begin{cases}
        S_1(t) & \text{if } t \in [h_0, h_1] \\
        S_2(t) & \text{if } t \in [h_1, h_2] \\
        \vdots & \vdots \\
        S_n(t) & \text{if } t \in [h_{n-1}, h_n ]
    \end{cases}
\end{equation}

where each $S_i(t)$ is a cubic polynomial:
\begin{equation}
    S_i(t) = a_i + b_i(t) + c_i(t)^2 + d_i(t)^3 \quad \text{for } t \in [h_{i-1}, h_i]
\end{equation}

In the following, let $\xi = (a_1,b_1,c_1,d_1, \dots, a_n, b_n, c_n, d_n)$ be a vector of all coefficients of the polynomials in our spline and $\xi[i,j]$ be a subvector of $\xi$ containing the coefficients of the $i$-th and $j$-th polynomial.

Boundary conditions mean that we need our first polynomial to start in $(0,0)$ and the last polynomial to end in $(1,0)$. This gives us two equations:
\begin{equation}
    S_1(0) = a_1 = 0 \quad \text{and} \quad S_n(1)= a_n + b_n + d_n + c_n  = 0
\end{equation}

which we translate into the following matrix equation of the coefficients $\xi$ and a $2 \times 4n$ matrix $B$:

\begin{equation}
    B \xi^T =
    \begin{bmatrix}
        0 \\ 0
    \end{bmatrix}
\end{equation}
where 
\begin{equation}
    B = \begin{bmatrix}
        1 & 0 & 0 & 0 & \dots & 0 & 0 & 0 & 0 & 0 \\
        0 & 0 & 0 & 0 & \dots &0 & 1 & 1 & 1 & 1
    \end{bmatrix}
\end{equation}

The continuity conditions are met when the values at the knots are the same for the two meeting polynomials. This can be expressed as: 
\begin{equation}
    S_i(h_i) = S_{i+1}(h_i) \Leftrightarrow S_i(h_i)-S_{i+1}(h_i)=0 \quad \text{for } i = 1, \dots, n-1 
\end{equation}
and for each knot we can write this as a dot product of the coefficients $\xi[i,i+1]$ and a vector $c_i^0$:
\begin{equation}
    \label{eq:continuity_constraint}
    c_i^0 =\begin{bmatrix}
        1 & h_i & h_i^2 & h_i^3 & -1 & -h_i & -h_i^2 & -h_i^3
    \end{bmatrix}
\end{equation}
such that the condition at a knot $i$ becomes: 
\begin{equation}
    c_i^0 \xi[i,i+1]^T= 0
\end{equation}

The conditions of first and second derivatives being continuous can be expressed in a similar way. 
\begin{equation}
    \begin{aligned}
        S_i'(h_i) = S_{i+1}'(h_i) \Leftrightarrow S_i'(h_i)-S_{i+1}'(h_i)=0 \quad \text{for } i = 1, \dots, n-1 \\
        S_i''(h_i) = S_{i+1}''(h_i) \Leftrightarrow S_i''(h_i)-S_{i+1}''(h_i)=0 \quad \text{for } i = 1, \dots, n-1
    \end{aligned}
\end{equation}
and we can write these conditions as dot products of the coefficients $\xi[i,i+1]$ and vectors $c_i^1$ and $c_i^2$:
\begin{equation}
    \label{eq:c1_constraint}
    c_i^1 =\begin{bmatrix}
        0 & 1 & 2h_i & 3h_i^2 & 0 & -1 & -2h_i & -3h_i^2
    \end{bmatrix}
\end{equation}
\begin{equation}
    \label{eq:c2_constraint}
    c_i^2 =\begin{bmatrix}
        0 & 0 & 2 & 6h_i & 0 & 0 & -2 & -6h_i
    \end{bmatrix}
\end{equation}
such that the conditions at a knot $i$ become:
\begin{equation}
    \begin{aligned}
        c_i^1 \xi[i,i+1]^T= 0 \\
        c_i^2 \xi[i,i+1]^T= 0
    \end{aligned}
\end{equation}
Having defined the smoothness constraints for a given knot $i$ we can construct  matrices $C^0, C^1, C^2$ each with dimensions $(n-1) \times 4n$ where each row $i$ corresponds to the respective constraint at the knot $i$ with $4\cdot (i-1)$ zeros before the constraint and $4\cdot (n-1-i)$ zeros after the constraint.
    E.g. for $n=4$ the $C^0$ matrix would look as follows:
    \begin{equation*}
        C^0 = \begin{bmatrix}
            c_1^0 & 0 & 0 & 0 & 0 & 0 & 0 & 0 & 0 \\
                0 & 0 & 0 & 0 & c_2^0 & 0 & 0 & 0 & 0 \\
                0 & 0 & 0 & 0 & 0 & 0 & 0 & 0 & c_3^0
        \end{bmatrix}
    \end{equation*}
where each $c_i^0$ is a row vector as defined in Eq.~\ref{eq:continuity_constraint}. 

Now, the final system of equations can be written as:
\begin{equation}
    \label{eq:spline_system}
    \underbrace{\begin{bmatrix}
         B \\
         C^0 \\
         C^1 \\
         C^2
   \end{bmatrix}}_{:=A}
    \xi^T = \boldsymbol{0}
\end{equation}
resulting in $4n-2$ equations for $4n$ unknowns. To solve this system of equations in an interpolation setting, we usually impose two additional constraints to get a square system of equations. These constraints can be that the second derivative is zero at the endpoints or that the second derivatives at the first and last knots are equal. The former is known in the literature as a natural spline and the latter as not-a-knot spline \cite{kress2012numerical}.

Getting back to our original task of finding free parameters of the curve that we can optimize its energy with respect to, we note that given that we have $4n$ coefficients, the actual number of free parameters is considerably smaller due to the constraints. The problem in Eq.~\ref{eq:spline_system} is known as the problem of finding the Null Space of the row space of matrix $A$. A basis for such null space, denoted by $\mathcal{N}(A)$, can be found by computing the Singular Value Decomposition (SVD) \cite{SVD} of $A$. If $A$ is of rank $r$, then SVD of A is given by $A = U \Sigma V^T$ where $U$ and $V$ are orthogonal matrices with dimensions $((4n-2) \times 4n)$ each and $\Sigma$ is a $(4n \times 4n)$ diagonal matrix with $r$ nonzero singular values in the diagonal. The null space of $A$ is then given by the columns of $V^T$ corresponding to the zero singular values. Treating $\mathcal{N}(A)=:N$ as a $(4n \times (n-r))$ matrix, we have arrived at a set of $n-r$ free parameters $\omega$ that we can optimize with respect to. To recover the full set of coefficients $\xi$, we can use the following equation:
\begin{equation}
    \label{eq:full_coefficients}
    \xi = N\omega
\end{equation} 
and evaluate the spline at the desired points to get the curve $\gamma(t)$.

\subsection{Optimizing the spline to find a geodesic}
\label{sec:problemz_to_solve}
In the previous subsection, we have reduced the infinite set of functions in which we are looking for a geodesic to another but considerably smaller, infinite set of splines. The next step is to use optimization to find the spline that minimizes the energy defined in Definition~\ref{def:metric_enables}. Calculating the energy requires computing an integral, which is, in practice, approximated by a sum over a discretized interval.

In the following treatment we assume that $\mathcal{D}=\mathbb{R}^n$ and let $f_{\theta}:\mathcal{Z}\rightarrow \mathbb{R}^n$ be a decoder parametrized by $\theta$ and $\gamma: [0,1] \rightarrow \mathcal{Z}$ be a spline in the latent space, then the approximation of the energy of $\gamma$ is given by:

\begin{equation}
\label{eq:energy_approx}
    \begin{aligned}
        E(\gamma) &=  \frac{1}{2}\int_0^1 | \gamma^{\prime}(t) |_g^2 dt \\
         &=  \frac{1}{2}\int_0^1 |\frac{\partial}{\partial t} f_{\theta}(\gamma(t))|_{E}^2 dt \\
        &\approx\frac{1}{2 \Delta t}\sum_{i=2}^{n_t} \left\| f_{\theta}(\gamma(\bar{t}_i)) - f_{\theta}(\gamma(\bar{t}_{i-1})) \right\|^2 =: \bar{E}(\gamma)
    \end{aligned}
\end{equation}

where $\left\{\bar{t}_i\right\}_{i=0}^{n_t}$ is a sequence of $n_t$ points in the interval $[0,1]$. Combining this with the discussion in the previous section, we can now define the optimization problem as simply:
\begin{equation}
    \label{eq:optimization_problem}
    \begin{aligned}
        \min_{\boldsymbol{\omega}} \quad & \bar{E}(\gamma_{\boldsymbol{\omega}}) 
    \end{aligned}
\end{equation}

where we use $\boldsymbol{\omega} = \left\{\omega_j\right\}_{j=1}^{k}$ to de note the parameters of the $k$ different splines given the dimensionality of the latent space $\mathcal{Z}\in \mathbb{R}^k$ and remind the reader that $\gamma_{\boldsymbol{\omega}}(t) = (\gamma_{\omega_1}^1(t), \dots, \gamma_{\omega_k}^k(t)) \in \mathcal{Z}$. 

Using optimization to learn the manifold will result in different approximations depending on the initialization of the parameters and the optimization algorithm used. Considering the problem in light of the first line of Eq.~\ref{eq:energy_approx}, we can see that the Riemannian metric becomes the stochastic term. In this sense, the manifold is stochastic, and the resulting distances between points will be affected by this stochasticity.

Following \citep{pmlr-v251-syrota24a}, having access to an ensemble of decoders allows us, in principle, to make the optimization problem in Eq.~\ref{eq:optimization_problem} aware of the uncertainty involved. The methodology effectively uses Monte Carlo methods to compute the energy with respect to the uncorrelated posterior over parameters. This posterior approximated by an ensemble. The following equation is the optimization problem we solve and makes the idea explicit:

\begin{equation}
    \label{eq:ensemble_optimization_problem}
    \begin{aligned}
         \min_{\boldsymbol{\omega}} \quad &\frac{1}{2 \Delta t}\sum_{i=2}^{n_t} \left\| f_{\hat{\theta}_j}(\gamma_{\boldsymbol{\omega}}(\bar{t}_i)) - f_{\hat{\theta}_{k}}(\gamma_{\boldsymbol{\omega}}(\bar{t}_{i-1})) \right\|^2\\
    \end{aligned}
\end{equation}

where $\Delta_t=\bar{t}_i -\bar{t}_{i-1} $ is the step size in the discretization of the interval $[0,1]$ and is assumed to be constant. The decoders $f_{\hat{\theta}_{k}}$ and $f_{\hat{\theta}_{kj}}$ are sampled uniformly and independently from the ensemble.

\newpage
\section{Experiments}
\label{appendix:experiments}

\subsection*{$\mathcal{M}$-flow on \textsc{mnist} and \textsc{cifar10}}

The decoder is modeled by 10 coupling flow layers introduced by \citep{dinh2015nice} where the conditioner is 2-layer MLP  with 50 hidden units and ReLu activation function,  and transformer is the RQS spline \cite{durkan2019neural} in the interval $[-3,3]$ with 8 knots (22 knots for \textsc{cifar10}). The decoder has 5.5 million parameter for \textsc{mnist} and 55 million for the \textsc{cifar10} model.

The encoder consists of 4 Residual Blocks \cite{he2015deepresiduallearningimage} with Convolutional transformations each time multiplying the number of channels by 5 and the Relu activation function. The last layer is the linear layer with hyperbolic tangent function multiplied by 3 at the end to constrain the output to [-3,3] and accommodate the coupling flow above. The encoder has 135 thousand parameters for both \textsc{mnist} and  \textsc{cifar10} models.

Adam \citep{kingma2017adam} was used for all the experiments.

\subsection*{VAE on \textsc{fmnist} and \textsc{celeba}}

The encoder consists of five $4 \times 4$ convolutional layers, the first two with 128 channels, followed by the next three with 256 channels, with the first and last having a stride of one and the others a stride of two; two linear layers, one with 256 units and the second with 64 units (latent dimensions) for the VAE approximate posterior mean and 64 for the VAE approximate posterior variance.

The decoder starts with two fully-connected layers, one with 256 units and one with 16384 units; five $4 \times 4$ resize convolutional layers \citep{odena2016deconvolution}, with the same stride configuration as the encoder, but with 256, 128, 64, and 32 channels, respectively.
All layers in the encoder and decoder have an ELU activation function.
Total number of parameters: $6.5$ million.

The VAE is trained with an additional loss term coming from the Perceptual loss \cite{prc}. 

\subsection*{Geodesic computation}

The geodesic computation was done on models that had an ensemble of 8 decoders following the procedure of \cite{pmlr-v251-syrota24a} and hyperparameters in Table \ref{tab:geodesic_training_parameters}.
\begin{table}[ht]
    \centering
    \small
    \begin{tabular}{|c|r|}
        \hline
        \textbf{Parameter} & \textbf{Value} \\
        \hline
        \hline
        Initialization of free parameters & zeros (straight line) \\
        \hline
        Number of polynomials in the spline & 10 \\
        \hline
        Discretization in time (energy) & 256 \\
        \hline
        Discretization in time (final length) & 256 \\
        \hline
        Optimizer & Adam \cite{kingma2017adam} \\
        \hline 
        Max. steps & 4096 \\
        \hline
        Learning rate & 0.01\\
        \hline
        Early stopping patience (steps) & 100 \\
        \hline
        Early stopping delta & 1.0\\
        \hline
    \end{tabular}
    \caption{Shared geodesics training parameters for all models.}
    \label{tab:geodesic_training_parameters}
\end{table}

\end{document}